\newcommand{\eg}{{\it e.g.}}
\newcommand{\ie}{{\it i.e.}}
\newcommand{\tr}{{\mathrm {tr}}}
\newcommand{\norm}[1]{\left\Vert#1\right\Vert}
\newcommand{\set}[1]{\left\{#1\right\}}
\newcommand{\parr}[1]{\left (#1\right )}
\newcommand{\brac}[1]{\left [#1\right ]}
\newcommand{\ip}[1]{\left \langle #1 \right \rangle }
\newcommand{\Real}{\mathbb R}
\newcommand{\Complex}{\mathbb C}
\newcommand{\Natural}{\mathbb N}
\newcommand{\Exp}[1]{e^{#1}}
\DeclareMathOperator{\find}{find}
\DeclareMathOperator{\maximize}{maximize}
\DeclareMathOperator{\minimize}{minimize}
\DeclareMathOperator{\st}{subject\ to}
\DeclareMathOperator{\Mod}{mod}
\newcommand{\1}{\mathbf 1}
\newcommand{\V}{\overline{V}}
\newcommand{\PP}{\mathcal{P}}
\newcommand{\R}[1]{R\left (#1\right )}
\begin{document}

\title{A Global Approach for Solving Edge-Matching Puzzles}

\author{S.Z.~Kovalsky, D.~Glasner and R.~Basri}
\date{}

\maketitle
\newcommand{\slugmaster}{}

\begin{abstract}
We consider apictorial edge-matching puzzles, in which the goal is to arrange a collection of puzzle pieces with colored edges so that the colors match along the edges of adjacent pieces. We devise an algebraic representation for this problem and provide conditions under which it exactly characterizes a puzzle. Using the new representation, we recast the combinatorial, discrete problem of solving puzzles as a global, polynomial system of equations with continuous variables. We further propose new algorithms for generating approximate solutions to the continuous problem by solving a sequence of convex relaxations.
\end{abstract}

\begin{keywords}
edge-matching puzzles, convex optimization, relaxation, polynomial systems
\end{keywords}
\begin{AMS}\end{AMS}

\section{Introduction}
Jigsaw puzzles \cite{AnneD.Williams2004}, dating back to the 1760s, are among the most popular single-player puzzles. The edge-matching puzzle, introduced in the 1890s, is a variation of the jigsaw puzzle in which the goal is to arrange a given collection of tiles with colored edges so that the colors match up along the edges of adjacent tiles. An example of an edge-matching puzzle with 36 square pieces is shown in Figure~\ref{fig:example_6x6}. Edge-matching puzzles are challenging compared to standard jigsaw puzzles as only an entire solution guarantees the correctness of any local part of the solution.

Despite recent breakthroughs in algorithmic solutions for pictorial puzzles \cite{Pomeranz2011, Gallagher2012, kilho_eccv2014}, in which one aims to reorganize a scrambled image, relatively little attention had been given to apictorial edge-matching puzzles.
This NP-hard problem~\cite{Demaine2001} sparked a lot of interest following the launch of the Eternity puzzle challenges. The challenge posed in the \emph{``Eternity I"} puzzle was to tile a large dodecagon with 209 irregularly shaped smaller polygonal pieces; it was marketed as being practically unsolvable, but was solved within a year, for a prize of 1 million pounds. For the \emph{``Eternity II"} puzzle one must correctly place 256 square pieces, whose edges are marked with different patterns, into a 16$\times$16 grid. The puzzle was launched in 2007 and to date remains unsolved. A prize of 2 million dollars was on offer up until December 2010.

\begin{figure}[t]
\centering
\begin{subfigure}[b]{0.49\textwidth}
  \centering
  \includegraphics[width=0.7\columnwidth]{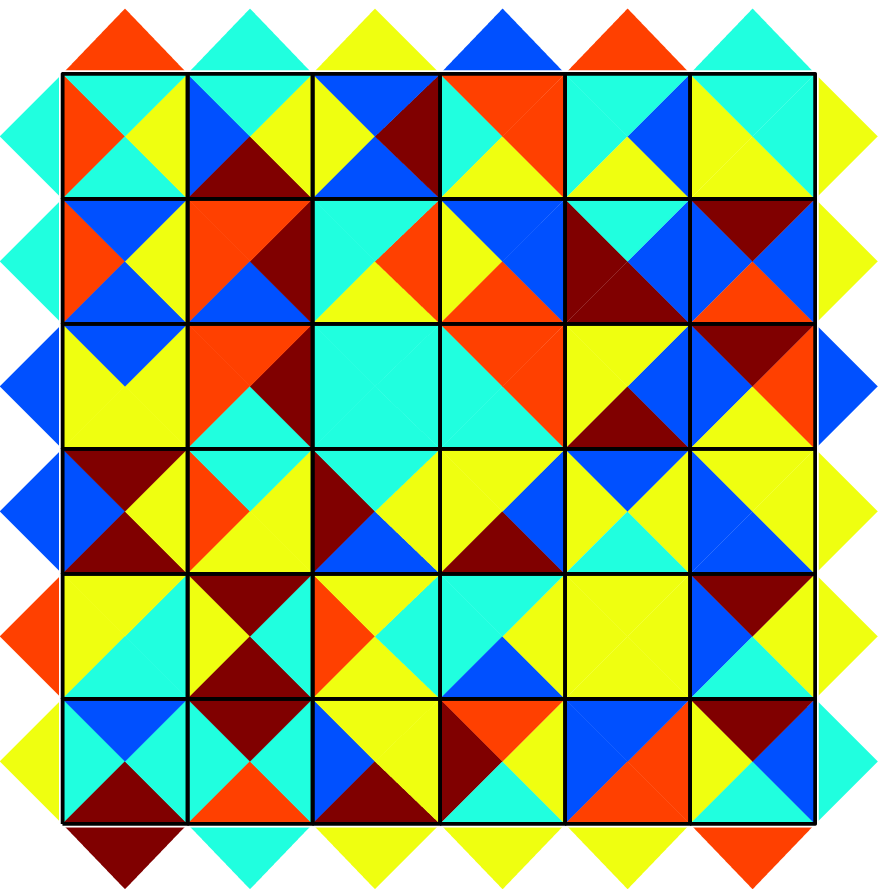}
  \caption{Scrambled}
\end{subfigure}
\begin{subfigure}[b]{0.49\columnwidth}
  \centering
  \includegraphics[width=0.7\columnwidth]{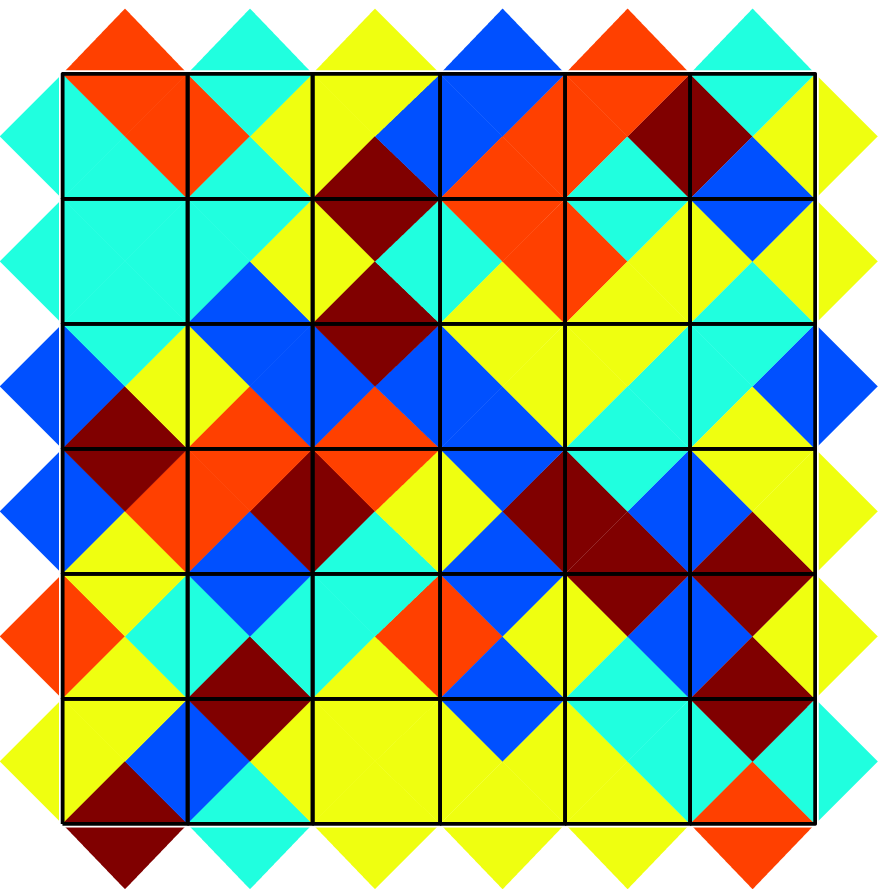}
  \caption{Solved}
\end{subfigure}
\caption{An edge-matching puzzle with 36 square pieces and a frame. Square puzzle pieces are overlaid with four colored triangles, representing the colors of their four edges. The puzzle frame is similarly represented by colored triangles placed along its boundaries. Given a scrambled puzzle as shown on the left, the goal is to rearrange the pieces so that the colors match along adjacent edges. A solution generated using our method is shown on the right.}\label{fig:example_6x6}
\end{figure}

In this paper we propose a novel representation for apictorial edge-matching puzzle games in terms of algebraic varieties, \ie, as solutions of systems of polynomial equations derived from the pieces of the puzzle. We explain how to generate systems of polynomial equations which are satisfied by puzzle solutions. We refer to systems for which the converse also holds, that is, any solution of the system is a solution of the puzzle, as \emph{complete} representations. We characterize and prove the completeness of representations for 2-dimensional translation only puzzles (\ie puzzles with 2-dimensional pieces which can be translated but not rotated).

Using our algebraic representation we devise new algorithms for solving edge-matching puzzles.
We show that approximate solutions can be generated by solving a sequence of \emph{continuous} and \emph{global} convex optimization problems. Our motivation for seeking a global solution strategy is the unique characteristic of edge-matching puzzle problems: only an entire (global) solution provides a certificate of correctness for any local part of the solution. Convergence to a global solution of the original non-convex problem, which corresponds to a solution of the puzzle, is achieved for some interesting puzzle instances.

Specifically, we propose two types of convex relaxations for 2-dimensional translation only edge-matching puzzles. For puzzles where we know in advance that pieces may only be placed in a finite set of predetermined locations, we propose a \emph{Vandermonde formulation} where a solution to the puzzle is attained by approximately solving a constrained optimization problem over the manifold of Vandermonde matrices using linear programming.
For the more general setting, we propose a \emph{rank one formulation} where we approximate a solution to a constrained optimization over rank one matrices using semidefinite programming.

Finally, we show how our computational framework and algorithms can be extended to various interesting variants of edge-matching puzzles including higher-dimensional puzzles, puzzles in which the pieces have irregular shapes (\eg, see Figure~\ref{fig:example_tangram_B}) and puzzles for which not only the location but also the orientation of the pieces is unknown (\eg, see Figure~\ref{fig:example_rotations}).

Effective representations and algorithms for solving puzzle games have applications beyond mere theoretical interest. Computational solutions of puzzles are used in computer aided reconstruction in archaeology \cite{Koller2006,Brown2008,CristinaDaGamaLeito1998,Leitao2005}, the recovery of shredded documents \cite{Justino2006,Marques2009} or photos \cite{Liu2010}, and in image editing \cite{Butman2008,Cho2010a}. They are also relevant to speech descrambling \cite{Zhao2007}, machine translation of text \cite{Levison1967} and even biology (DNA sequence reconstruction can be viewed as a jigsaw puzzle) \cite{Marande2007} and chemistry (determination of molecular conformation) \cite{Lozano-Perez2000}.

\section{Related Work}
Early work, beginning with Freeman and Gardner \cite{Freeman1964} from 1964, develops algorithmic solutions for jigsaw puzzle games based solely on their geometry, a detailed review can be found in \cite{Tybon2004}. Subsequent algorithms combine shape with the image content of the pieces \cite{Kosiba1994,Chung1998,Yao2003,Makridis2006,Nielsen2008}.

A number of methods for solving pictorial puzzles with square pieces have been proposed. Cho et al. \cite{Cho2010} present a probabilistic framework based on the patch transform proposed in \cite{Butman2008,Cho2010a}, which synthesizes an image from a set of image patches. Approximate puzzle reconstruction is achieved via loopy belief propagation on a suitable graphical model. More recently, Pomerantz et al. \cite{Pomeranz2011} and Gallagher \cite{Gallagher2012} proposed greedy algorithms that solve puzzles comprising thousands of pieces. While \cite{Pomeranz2011} solves only for translations in the plane, \cite{Gallagher2012} and \cite{kilho_eccv2014} allow pieces to rotate and solve for their orientations as well. Methods based on non-convex constrained quadratic programming \cite{Andalo2012} as well as genetic algorithms \cite{Sholomon2013} have shown competitive results. All of these approaches however, rely on the statistics of natural images (either explicitly or implicitly). Therefore, they are not expected to perform well if directly applied to apictorial edge-matching puzzles, in which there is no image content.

Specific puzzles have also been studied: Deutsch and Hayes \cite{Deutsch1972} suggest a heuristic approach for solving the Tangram puzzle. A connectionist approach to solving the same puzzle was proposed by Oflazer \cite{Oflazer1993}. Dattorro \cite{Dattorro2005} proposed an interesting convex semidefinite programming approach for attempting to solve the Eternity II puzzle.

Demaine and Hearn \cite{Demaine2001} and Demaine and Demaine \cite{Demaine2007} studied jigsaw puzzles from the perspective of combinatorial game theory. In the latter, the decision problems corresponding to jigsaw puzzles and edge-matching puzzles (\ie, ``does this puzzle have a solution") are shown to be NP-complete.
\section{Problem Statement}
\label{section:probem_statement}
\begin{figure}[t]
  \centering
  \includegraphics[width=0.9\textwidth]{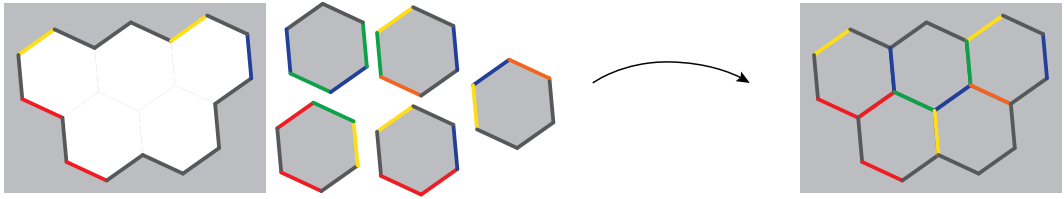}
  \caption{Problem statement --  place all pieces within a given bounding frame such that all edges pair with matching edges.}
  \label{fig:problem_statement}
\end{figure}
In this work, we focus on \emph{ideal edge-matching framed} puzzles. That is, apictorial geometric puzzles, in which a set of pieces of known shapes and edge colors is given. The puzzle is bounded by a frame and each edge must perfectly match (in its location, color and orientation) an edge of either another piece or the frame.

We start by considering simple polygonal 2-dimensional puzzles, where puzzle pieces are equilateral polygons with unit-length colored edges. Furthermore, we will assume that puzzle pieces may only be translated (\ie, the rules of the game allow shifting the pieces but not rotating them).

The goal of the game is then to place all pieces within the given bounding frame, such that all edges pair with matching edges. This, in turn, implies that the puzzle pieces cover its frame (for a properly designed puzzle), see Figure~\ref{fig:problem_statement}.

Next, we formally present an algebraic representation for these types of puzzles. In Section~\ref{sect:extensions} we show how to extend our framework to cases in which the above assumptions do not hold such as puzzles with irregularly shaped pieces, higher-dimensional puzzles and puzzles in which rotating the pieces is also allowed.

\subsection{Formulation}
Consider a 2-dimensional polygonal puzzle with $N$ pieces. As illustrated in Figure~\ref{fig:two_matching_edges}, we describe the $i$'th piece by the location $t_i \in \Real^2$ of its center and its set of edge elements $E_i$ (polygon sides). Each edge $j\in E_i$ is described by the relative location of its center $b_{i,j}$ with respect to the piece center, its color $c_{i,j}$ and its orientation $\theta_{i,j}$. Therefore, the absolute location of the $j$'th edge of the $i$'th piece is given by the sum $t_i+b_{i,j}$.

We shall consider puzzles with a bounding frame, which can be seen as another puzzle piece, with the exception of being static. We denote the piece corresponding to the puzzle frame by $i=0$ and the properties of its edge elements by $b_{0,j}$, $c_{0,j}$ and $\theta_{0,j}$.

Under the assumption that puzzle pieces may \emph{only be translated}, the goal of the game amounts to finding $t_1,\ldots,t_N$ for which all edge elements pair with matching edge elements in their spatial locations, colors and orientations (see Figure~\ref{fig:problem_statement}). We shall refer to a configuration $t_1,\ldots,t_N$ that satisfies this criterion as a \emph{solution} of the puzzle.

\begin{figure}[h!]
  \centering
  \includegraphics[width=0.7\textwidth]{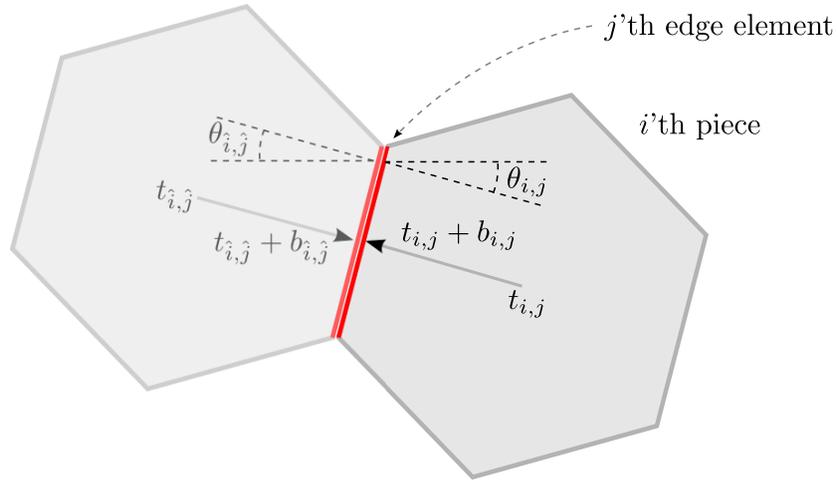}
  \caption{The description of edge elements and the constraints implied by two matching edge elements}
  \label{fig:two_matching_edges}
\end{figure}

\section{Algebraic Representation of a Puzzle}
\label{section:alg_representation}
In this section, we derive an algebraic representation of a puzzle. We show how, given a puzzle, one can construct an algebraic system of equations that characterizes the puzzle. Furthermore, we prove that this representation is complete, in the sense that it encapsulates all the information available in the given puzzle and its solutions are exactly the solutions of the puzzle.

Suppose $t_1,\ldots,t_N$ is a solution of the puzzle. By definition, every edge is paired with a matching edge. Consider such a pair of matching edges,  edge $j$ of the $i$'th piece and  edge $\hat{j}$ of the $\hat{i}$'th piece. By definition, these edges must spatially coincide, \ie,
\begin{equation}
  t_i+b_{i,j} = t_{\hat{i}}+b_{\hat{i},\hat{j}}. \label{eqn:piece_match_pos}
\end{equation}
They must have the same color
\begin{equation}
  c_{i,j} = c_{\hat{i},\hat{j}}, \label{eqn:piece_match_color}
\end{equation}
and they must face opposite directions
\begin{equation}
  \theta_{i,j} \equiv \theta_{\hat{i},\hat{j}}+\pi \quad (\Mod \ 2\pi). \label{eqn:piece_match_theta}
\end{equation}
See Figure~\ref{fig:two_matching_edges} for an illustration. In particular, notice that~\eqref{eqn:piece_match_pos} linearly constrains the relative locations of the two pieces. For simplicity of notation we will omit the congruent modulo notation for angles with the understanding that $\theta = \phi$ is shorthand for $\theta \equiv \phi \ (\Mod \ 2\pi)$.

In general, however, we do not know in advance which edges pair with each other. Nevertheless, we notice that all edges of a certain color and orientation $(c,\theta)$ must pair with complementary edges of the same color and opposite orientation $(c,\theta+\pi)$. Therefore, for every $(c,\theta)$ we have an equality of sets
\begin{equation}
\set{t_i+b_{i,j} : \begin{array}{l}
c_{i,j} = c \\
\theta_{i,j}=\theta \\
\end{array}} =
  \set{t_i+b_{i,j} : \begin{array}{l}
c_{i,j} = c \\
\theta_{i,j}=\theta+\pi \\
\end{array}}.
\label{eqn:eq_edges_set}
\end{equation}
Clearly, equation~\eqref{eqn:eq_edges_set} holds for all $(c,\theta)$ if and only if $t_1,\ldots,t_N$ is a solution of the puzzle.

If we define the signed indicator function w.r.t. $(c,\theta)$ by
\begin{equation}
s_{i,j}(c,\theta) = \left\{
\begin{array}{cl}
1, & c_{i,j}=c \text{ and } \theta_{i,j}=\theta \\
-1,& c_{i,j}=c \text{ and } \theta_{i,j}=\theta+\pi \\
0 & \text{otherwise} \\
\end{array}
 \right.
\end{equation}
then, by construction we have derived the following result:

\bigskip
\begin{proposition} If $t_1,\ldots,t_N$ is a solution of the puzzle then
\begin{equation}
\sum_{i,j} s_{i,j}(c,\theta) f\parr{t_i+b_{i,j}} = 0
\label{eqn:eq_basic_condition}
\end{equation}
for every $(c,\theta)$ and every real valued function $f:\Real^2\to\Real$.
\label{proposition:basic_condition}
\end{proposition}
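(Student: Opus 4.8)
The plan is to expand the left-hand side of \eqref{eqn:eq_basic_condition} according to the three cases in the definition of $s_{i,j}(c,\theta)$ and then cancel the surviving terms in pairs. First I would split the sum over all edges into those whose label is exactly $(c,\theta)$, which contribute $+f(t_i+b_{i,j})$, and those whose label is $(c,\theta+\pi)$, which contribute $-f(t_i+b_{i,j})$; every other edge contributes $0$. This rewrites the identity as
\[
\sum_{i,j} s_{i,j}(c,\theta)\, f\parr{t_i+b_{i,j}} = \sum_{\substack{c_{i,j}=c\\\theta_{i,j}=\theta}} f\parr{t_i+b_{i,j}} \;-\; \sum_{\substack{c_{i,j}=c\\\theta_{i,j}=\theta+\pi}} f\parr{t_i+b_{i,j}},
\]
so it suffices to show the two sums on the right agree.

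Next I would invoke the defining property of a solution, namely that every edge is paired with exactly one matching edge (an adjacent piece edge or a frame edge). By \eqref{eqn:piece_match_color} and \eqref{eqn:piece_match_theta}, the partner of an edge labeled $(c,\theta)$ must share its color and face the opposite direction, hence carry the label $(c,\theta+\pi)$; by \eqref{eqn:piece_match_pos} the two edges occupy the same absolute location $t_i+b_{i,j}$. Consequently the global matching restricts to a bijection between the edges labeled $(c,\theta)$ and those labeled $(c,\theta+\pi)$, under which paired edges have identical position. Reindexing the first sum by sending each $(c,\theta)$-edge to its partner therefore turns it term-by-term into the second sum, since $f$ is evaluated at the common position of each pair; the difference vanishes for every $f:\Real^2\to\Real$, and since $(c,\theta)$ was arbitrary this proves \eqref{eqn:eq_basic_condition}.

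The only point requiring care, and the reason I would argue through the explicit edge matching rather than directly from the set equality \eqref{eqn:eq_edges_set}, is multiplicity: a set equality by itself discards how many edges sit at a given location, whereas the sum counts each edge separately. Pairing edges (rather than positions) sidesteps this entirely, because the matching is a genuine bijection on the finite collection of edges and each edge contributes precisely one summand. I do not anticipate any further obstacle, as the remaining content is the bookkeeping of collecting terms by their $(c,\theta)$ labels.
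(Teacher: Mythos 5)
Your proof is correct and rests on the same idea as the paper's derivation: in a solution every edge is matched to a unique partner of the same color, opposite orientation, and identical absolute location, so the $+1$ and $-1$ terms of the signed sum cancel in pairs. The only difference is one of rigor rather than route: the paper passes through the set equality \eqref{eqn:eq_edges_set} and then declares the proposition to hold ``by construction,'' whereas you replace that step with the explicit bijection induced by the matching between the $(c,\theta)$-edges and the $(c,\theta+\pi)$-edges. Your remark about multiplicity pinpoints exactly what the set-equality phrasing glosses over --- equality of \emph{sets} ignores how many edges sit at a common location, while the sum counts each edge separately --- so your version supplies the detail the paper elides; it is a refinement of the same argument, not a different one.
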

\bigskip

Note that~\eqref{eqn:eq_basic_condition} is an algebraic constraint on $t_1,\ldots,t_N$. This allows us to construct an algebraic \emph{system} of constraints by taking all admissible  $(c,\theta)$'s and different $f$'s.
Different choices of a function $f$ establish systems with possibly different properties.

\subsection{Linear Representation}
A particular choice of $f$ is $f_{v}(u)=v^Tu$ for any $v\in\Real^2$. This choice yields a linear representation of the puzzle. Namely, equations of the form
\begin{equation}
\sum_{i,j} s_{i,j}(c,\theta)\parr{t_i+b_{i,j}} = 0
\label{eqn:eq_basic_linear}
\end{equation}
for all $(c,\theta)$ define a \emph{linear system of equations} in $t_1,\ldots,t_N$. Clearly, every solution of the puzzle must satisfy this system of equations. Usually, however, the converse does not hold, and this linear system does not determine the solutions of the puzzle.

\subsection{Polynomial Representation}
\label{section:poly_representation}
In the rest of the paper, we use functions $f$ which belong to a family of \emph{exponential} functions. For a given $k\in\Real^2$, we define $f_k(u)=\Exp{k^Tu}$. Equation~\eqref{eqn:eq_basic_condition} then becomes
\begin{equation}
\sum_{i,j} s_{i,j}(c,\theta) \Exp{k^T\parr{t_i+b_{i,j}}} = 0.
\label{eqn:eq_basic_exp1}
\end{equation}
To simplify notations we change variables $T_i = \Exp{t_i}$ (element-wise exponentiation) to obtain
\begin{equation}
\sum_{i} \alpha_i^{(k,c,\theta)}T_i^k = 0,
\label{eqn:eq_basic_exp2}
\end{equation}
where $\alpha_i^{(k,c,\theta)}=\sum_j {s_{i,j}(c,\theta) \Exp{k^T b_{i,j}}}$, and $T_i^k$ is defined according to the multi-index notation
$$
T_i^k=(e^{t_i})^k=e^{k^T t_i}.
$$

Note that~\eqref{eqn:eq_basic_exp2} is simply a polynomial in (the entries of) $T_1,\ldots,T_N$. Collecting many equations of the form~\eqref{eqn:eq_basic_exp2} for all $(c,\theta)$ and various choices of $k$ establishes a \emph{polynomial system of equations} in $T_1,\ldots,T_N$.

According to Proposition~\ref{proposition:basic_condition}, if $t_1,\ldots,t_N$ is a solution of the puzzle then $T_1,\ldots,T_N$ is a solution of any such polynomial system of equations. Next, we discuss the conditions under which the converse holds, so that every solution of the polynomial system corresponds to a valid solution of the puzzle. In what follows, we identify $t_1,\ldots,t_N$ with $T_1,\ldots,T_N$, and thus refer to the latter as a solution of the puzzle as well.

\subsection{Completeness of Polynomial Representations}
Naturally, we seek a complete representation, one that encapsulates all the information available in the given puzzle and exactly characterizes the solutions of the puzzle.

We consider the polynomial system constructed by collecting equations of the form~\eqref{eqn:eq_basic_exp2} for different $(c,\theta)$ and various $k$ values.
Intuitively, if additional equations provide additional independent constraints then we can hope that collecting many such equations will sufficiently constrain the unknown variables, exactly determining the solutions of the puzzle. Indeed, we can show that a system of sufficiently many such polynomial equations establishes an equivalent representation, in which $T_1,\ldots,T_N$ is a solution of the puzzle if and only if it is a root of this system.

More formally, consider the polynomial system $\PP$ constructed by collecting equations as follows: for every possible type of edge $(c,\theta)$ add $K(c,\theta) \in \Natural$  equations of the form~\eqref{eqn:eq_basic_exp2}:
\begin{equation*}
\sum_{i} \alpha_i^{(k,c,\theta)}T_i^k = 0
\end{equation*}
for distinct values of $k$, where $K(c,\theta)$ is an integer which depends on the edge type $(c,\theta)$. This construction is well defined since, by our assumption of polygonal puzzle pieces, there exist finitely many distinct edge types $(c,\theta)$.

By construction, every solution of the puzzle is a solution of $\PP$.
The proposition below suggests that the converse also holds, provided that $\PP$ is defined by sufficiently many equations. In particular, Appendix~\ref{section:proof_equivalence} provides a \emph{constructive} proof which shows that the number of equations required for each type of edge $(c,\theta)$ is exactly the number of puzzle edges of this type.

 \bigskip
\begin{proposition}
\label{proposition:equivalent_representation}
There exists (constructively) a polynomial system $\PP$ with
\begin{equation*}
K(c,\theta) = \#\set{\text{edges of type } (c,\theta)}
\end{equation*}
equations for each edge type $(c,\theta)$ that satisfies that $T_1,\ldots,T_N$ is a solution of $\PP$ if and only if it is a solution of the puzzle.
\end{proposition}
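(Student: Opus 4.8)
The forward implication is already in hand: Proposition~\ref{proposition:basic_condition} applied to $f=f_k$ shows that every puzzle solution satisfies every equation of the form \eqref{eqn:eq_basic_exp2}, hence solves $\PP$ no matter which exponents $k$ we select. The entire content is therefore the converse, and my plan is to choose, for each edge type, exactly $K(c,\theta)$ exponents $k$ so that the resulting equations force the set equality \eqref{eqn:eq_edges_set}, which by construction is equivalent to solving the puzzle. I would reduce first to a single conjugate pair of types: by \eqref{eqn:eq_edges_set}, $T_1,\ldots,T_N$ solves the puzzle iff for every $(c,\theta)$ the multiset of positions of edges of type $(c,\theta)$ equals that of type $(c,\theta+\pi)$, so it suffices to force this one multiset equality from the equations attached to the pair $\set{(c,\theta),(c,\theta+\pi)}$ and then intersect over the finitely many pairs.

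Writing $\zeta_l=\Exp{w_l}$ (element-wise) for the position-exponentials $w_l=t_i+b_{i,j}$ of the $m:=K(c,\theta)$ edges of type $(c,\theta)$, and $\eta_l$ for those of the $m':=K(c,\theta+\pi)$ edges of the opposite type, equation \eqref{eqn:eq_basic_exp2} for exponent $k$ reads $\sum_l \zeta_l^{\,k}-\sum_l \eta_l^{\,k}=0$. A bookkeeping point makes the count work: since $s_{i,j}(c,\theta+\pi)=-s_{i,j}(c,\theta)$, the equation associated with $(c,\theta+\pi)$ at any exponent $k$ has the \emph{same} zero set as this one. Hence the $K(c,\theta)=m$ exponents charged to $(c,\theta)$ together with the $K(c,\theta+\pi)=m'$ exponents charged to $(c,\theta+\pi)$ supply $m+m'$ usable equations, all of the single displayed form; evaluating at $k=0$ forces $m=m'$, so we in fact have $2m$ equations available for the pair.

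Next I would consolidate the $2m$ atoms into distinct support points: let $w^{(1)},\ldots,w^{(r)}$ be the distinct positions among $\set{w_l}$ and the opposite-type positions, with integer net multiplicities $n_1,\ldots,n_r$ (count of type-$(c,\theta)$ edges minus count of type-$(c,\theta+\pi)$ edges at that point). Then $r\le 2m$, $\sum_p n_p=0$, and multiset equality is exactly $n_1=\cdots=n_r=0$. The equation for exponent $k$ becomes the vanishing of the exponential sum $F(k):=\sum_p n_p\,\Exp{k^T w^{(p)}}$, whose frequencies $w^{(p)}$ are distinct points of $\Real^2$. Since distinct characters $k\mapsto\Exp{k^T w^{(p)}}$ are linearly independent, $F\equiv 0$ already forces all $n_p=0$; and restricting $F$ to a line on which the forms $w\mapsto v^T w$ take distinct values at the $w^{(p)}$ turns it into a univariate real exponential sum with $r$ distinct exponents, which, unless identically zero, has at most $r-1\le 2m-1$ zeros. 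Thus $2m$ evaluations lying on such a line and annihilating $F$ force $F\equiv 0$, hence the multiset equality, for every configuration whose distinct positions are separated by that line. Charging $m$ of these exponents to each conjugate type realizes $K(c,\theta)=\#\set{\text{edges of type }(c,\theta)}$.

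The main obstacle is uniformity. The exponents defining $\PP$ must be fixed once and for all from the puzzle's known data $\set{b_{i,j},c_{i,j},\theta_{i,j}}$, yet they must detect \emph{every} non-solution, whose positions $w^{(p)}$—and in particular whether two of them project to the same value under a given $v$—depend on the unknown $t_i$. No single direction separates all frequency sets at once, since $w^{(p)}-w^{(q)}$ ranges over all of $\Real^2$ as the $t_i$ vary, so the clean ``one good line'' argument of the previous paragraph does not immediately apply to all configurations simultaneously. I would resolve this exactly as the constructive appendix proof does: instead of demanding a universally nonsingular generalized Vandermonde, I would exploit the integer, sign-constrained ($\pm1$) structure of the admissible net multiplicities together with a sufficiently generic choice of the $2m$ exponent directions, so that a nonzero admissible $(n_p)$ lying in the common kernel would force two distinct frequencies $w^{(p)}\neq w^{(q)}$ to agree against all chosen directions at once, a contradiction. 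Verifying that one such generic choice works for every configuration—the swap of ``$\exists$ exponents'' past ``$\forall$ configurations''—is the technical heart, and is precisely where I expect the ordering and elimination of the constructive proof to do the real work.
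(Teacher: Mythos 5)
Your forward direction and your reduction to per-type multiset equality are correct and coincide with the paper's first step, but the proof is genuinely incomplete at exactly the point you flag, and the repair you gesture at is not the one that works. Your zero-counting argument for $F(k)=\sum_p n_p\Exp{k^Tw^{(p)}}$ needs a direction $v$ that separates the unknown frequencies $w^{(p)}$, and these depend on the configuration being tested. The failure is concrete: if the chosen exponents all lie on a line with direction $v$, then any non-solution in which an unmatched $(c,\theta)$-edge and an unmatched $(c,\theta+\pi)$-edge sit at \emph{distinct} positions with equal projections $v^Tw^{(p)}=v^Tw^{(q)}$ merges the two characters, so every equation holds (the merged coefficient is $n_p+n_q=0$) while the multisets differ. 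Spreading exponents over several directions destroys the univariate zero count, and your proposed fix --- a ``sufficiently generic choice'' exploiting the $\pm1$ structure of admissible multiplicities --- is precisely the statement requiring proof; you explicitly defer it, so the quantifier swap ($\exists$ exponents working $\forall$ configurations) is never established. Moreover, it is doubtful the real-exponential route can reach the claimed count at all: forcing equality of two $m$-point multisets in $\Real^2$ from real power sums $\sum_i X_i^aY_i^b$ is the province of multisymmetric function theory, whose standard generating sets have size on the order of $m^2$, not $2m$.

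The paper closes this gap by changing the arithmetic rather than invoking genericity, and this is the idea missing from your proposal. It identifies $\Real^2$ with $\Complex$, sets $T_{i,j}=\Exp{t_i+b_{i,j}}\in\Complex$, and for each type takes the $K(c,\theta)$ \emph{complex monomial} equations $\sum_{(i,j)\in V(c,\theta)}T_{i,j}^k=\sum_{(i,j)\in\V(c,\theta)}T_{i,j}^k$ for the integer degrees $k=1,\ldots,K(c,\theta)$ (note these are not your real exponentials $\Exp{k^Tu}$; in real coordinates they oscillate). For this algebraic system uniformity is automatic: by B\'ezout's theorem (Lemma~\ref{lemma:bezout}) a system of degrees $1,2,\ldots,K$ has at most $K!$ solutions, and the $K!$ pairings are already solutions, so the solution set \emph{equals} the set of pairings --- for every configuration simultaneously, with no separating direction ever needed (equivalently, Newton's identities: complex power sums of degrees $1,\ldots,K$ determine a $K$-element multiset). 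The remaining bookkeeping is close to what you do implicitly: the paper first works with independent edge variables $T_{i,j}$ to get the pairing system $\PP_1$, then restores piece geometry via the constraints $T_{i,j}=T_iB_{i,j}$, and substitutes to obtain equations of the form \eqref{eqn:eq_basic_exp2}. In short, complexification plus the B\'ezout finiteness argument converts ``detect every bad configuration'' from an analytic separation problem --- which your argument cannot make uniform --- into a counting statement that is uniform by construction.
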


\bigskip
The proof relies on the following observation. Consider \emph{independently} all $K=K(c,\theta)$ edges of type $(c,\theta)$. Any pairing of these edges with corresponding opposite edges satisfies polynomial equations of the form~\eqref{eqn:eq_basic_exp2} for any order $k$. There are exactly $K!$ such pairings, as the number of permutations of $K$ elements. On the other hand, using B\'ezout's theorem \cite{Coolidge2004, Sottile2002} we can show that a polynomial system of equations of the form~\eqref{eqn:eq_basic_exp2} of orders $k=1,\ldots,K$ has no more than $K!$ distinct solutions. Thus, such a system exactly characterizes all possible independent pairings of $(c,\theta)$ edges,  in the sense that any subset of its equations is insufficient whereas additional higher-order polynomials are redundant. The full proof (given in Appendix~\ref{section:proof_equivalence}) follows by generalizing this observation to the intersection of multiple similar systems, each corresponding to edges of a different type $(c,\theta)$, and the addition of constraints accounting for the geometry of the puzzle pieces.

\section{Solving Puzzles}
\label{section:solving}
In previous sections we have shown that a puzzle problem can be faithfully represented as a system of polynomial equations
\begin{equation}
\set{\sum_{i} \alpha_i^{(k,c,\theta)}T_i^k = 0}_{k,c,\theta}
\label{eqn:eq_poly_system}
\end{equation}
in the unknowns $T_1,\ldots,T_N$.

Next, we discuss several approaches for determining a solution $T_1,\ldots,T_N$ of this system, either directly or by approximation. Once such a solution is found, one may recover the  piece locations $t_1,\ldots,t_N$ using the relation $T_i=\Exp{t_i}$. Assuming the conditions for Proposition~\ref{proposition:equivalent_representation} hold, these locations are guaranteed to solve the puzzle.

To simplify notations we present the derivations for 1-dimensional puzzles (with coordinates over $\Real$). Extending the results to the case of 2-dimensional puzzles, discussed so far,  (or in fact arbitrary dimension) is straightforward, see Appendix~\ref{section:solution_2d_techincal} for the technical details.

\subsection{Exact Solutions}
Solving polynomial systems of equations is an important problem which is covered in a large body of literature, for example see \cite{Cohen1999,Sturmfels2002,Cox2007,Laurent2012} and the references therein. Existing methods for solving generic systems may be coarsely classified into two categories: symbolic (exact) and numerical methods.

Gr\"obner algorithms \cite{Buchberger1976} are typical representatives of the class of symbolic methods. These algorithms seek to simplify a given polynomial system, to enable the extraction of its roots. This can be seen as a multivariate, non-linear generalization of the Euclidean algorithm (for computation of univariate GCD) and Gaussian elimination for linear systems. However, the complexity of methods for calculating Gr\"obner bases may be extremely high.

A popular approach in the class of numerical methods is homotopy continuation. Such methods rely on Bertini's theorem which introduces a continuous deformation between the polynomial system whose solution we seek and a simpler polynomial system whose solutions are known, see \cite{Li1997,Sommese2005} for more details. Keeping track of the roots during this deformation can provide a solution to the system. However, a good initialization (\ie, an initial polynomial system with known solutions) is crucial to the success of the algorithm. Unfortunately, the combinatorial nature of the polynomial systems we are concerned with typically makes it hard to find a good initialization.

\subsection{Reformulation and Convex Relaxation}
\label{section:reformulation}
Our attempts to solve very simple puzzles using generic polynomial system solvers met with limited success. Therefore, in what follows we describe a reformulation and optimization algorithms which try to take advantage of the unique properties of systems of the form~\eqref{eqn:eq_poly_system}.

Note that the polynomial system~\eqref{eqn:eq_poly_system} has a special structure: each one of its equations includes only $k$'th degree monomials in the unknown $T_i$'s. We exploit this structure in order to restate the problem of solving puzzles as a linear system of equations over a (non-linear) manifold of matrices. We discuss two alternative reformulations, relying on either Vandermonde or rank one matrices. We further discuss convex relaxations of the resulting problems, which can be used to solve certain types of puzzles.

\subsubsection{Vandermonde Reformulation and Preset Locations}
\label{section:reformulation_vandermonde}
Any polynomial system of equations can always be replaced with a linear system of equations of a higher dimension, by considering each monomial as a variable, and adding simple polynomial constraints coupling the new variables. In the case of our polynomial systems, this can be conveniently stated using Vandermonde matrices. Recall that
a matrix of the form
\begin{equation}
\begin{bmatrix}
1 & T_1 & T_1^2 & \cdots & T_1^K \\
1 & T_2 & T_2^2 & \cdots & T_2^K \\
1 & T_3 & T_3^2 & \cdots & T_3^K \\
\vdots & \vdots & \vdots & \ddots & \vdots \\
1 & T_N & T_N^2 & \cdots & T_N^K \\
\end{bmatrix}
\label{eqn:vandermonde_matrix}
\end{equation}
is known as a \emph{Vandermonde} matrix. It is straightforward to see that the problem of solving~\eqref{eqn:eq_poly_system} can be equivalently reformulated as
\begin{equation}
\begin{array}{ll}
\find & X = \brac{x_{ij}} \\
\st & X \text{\ is Vandermonde of size $N \times K$}  \\
& \sum_{i} \alpha_i^{(k,c,\theta)}x_{ij} = 0 \qquad \forall j,c,\theta. \\
\end{array}
\label{eqn:eq_equiv_problem_vandermonde}
\end{equation}
To ensure that $X$ contains the monomials necessary for a complete polynomial representation as in Proposition~\ref{proposition:equivalent_representation} we choose $K \geq \max_{(c,\theta)}K(c,\theta)$.
Formulation~\ref{eqn:eq_equiv_problem_vandermonde} amounts to solving a linear system over the manifold of Vandermonde matrices. Of course this new problem is as hard to solve as the original problem itself. We will now show how to derive an approximation under an additional assumption.

\paragraph{Preset Locations}
In many puzzles, pieces may only be placed at a finite number of preset locations. This is the case, for example, in square tiling puzzles (\eg, see Figure~\ref{fig:example_6x6}). This additional information can be incorporated into the formulation~\eqref{eqn:eq_equiv_problem_vandermonde}.

Suppose that piece locations $t_1,\ldots,t_N$ must belong to a finite \emph{known} set of feasible locations $\{s_1,\ldots,s_N\}$. This assumption can be formally stated by

\begin{equation}
\begin{bmatrix}t_1 \\ \vdots \\ t_N\end{bmatrix}
 = P
 \begin{bmatrix}s_1 \\ \vdots \\ s_N\end{bmatrix}
\end{equation}
for some permutation matrix $P$. In turn, since $T_i=\Exp{t_i}$, this implies that
\begin{equation} \label{eqn:T_eq_S_permutation}
\begin{bmatrix}T_1 \\ \vdots \\ T_N\end{bmatrix}
 = P
 \begin{bmatrix}S_1 \\ \vdots \\ S_N\end{bmatrix},
\end{equation}
where $S_i=\Exp{s_i}$. We use the known preset piece locations to define the Vandermonde matrix $Y$ by
\begin{equation}
Y =
\begin{bmatrix}
1 & S_1 & S_1^2 & \cdots & S_1^K \\
1 & S_2 & S_2^2 & \cdots & S_2^K \\
1 & S_3 & S_3^2 & \cdots & S_3^K \\
\vdots & \vdots & \vdots & \ddots & \vdots \\
1 & S_N & S_N^2 & \cdots & S_N^K \\
\end{bmatrix}.
\label{eqn:preset_locations_Y}
\end{equation}
Then, \eqref{eqn:T_eq_S_permutation} implies that any solution of problem \eqref{eqn:eq_equiv_problem_vandermonde} must satisfy
\begin{equation}
X=PY.
\label{eqn:X_eq_Y_permutation}
\end{equation}

Notice that $X=PY$ defines a Vandermonde matrix for any choice of \emph{permutation} matrix $P$. Therefore, problem \eqref{eqn:eq_equiv_problem_vandermonde} can be replaced with the following equivalent problem
\begin{equation}
\begin{array}{ll}
\find & P = [p_{i,j}] \\
\st & P \text{ is a permutation} \\
& X = \brac{x_{ij}} = PY \\
& \sum_{i} \alpha_i^{(k,c,\theta)}x_{i,j} = 0 \qquad \forall j,c,\theta. \\
\end{array}
\label{eqn:eq_equiv_problem_permutation}
\end{equation}
Thus, we have replaced the Vandermonde constraint with the restriction of the search space to the set of permutation matrices, which in turn corresponds to the set of feasible piece locations.

\paragraph{Convex Relaxation} Problem \eqref{eqn:eq_equiv_problem_permutation}
is a non-convex feasibility problem. Nevertheless, we next show that it can be restated as a linearly constrained quadratic maximization problem, and then \emph{relaxed} into a sequence of linear programs.

The Birkhoff-von Neumann theorem \cite{Birkhoff1947} asserts that permutations are the extremal points of the set of bi-stochastic matrices. Moreover, the Frobenius norm of matrices in this set is maximized by permutations. Using this, \eqref{eqn:eq_equiv_problem_permutation} can be replaced with the following equivalent optimization problem:
\begin{equation}
\begin{array}{ll}
\maximize & \norm{P}_F^{2} \\
\st & P\1=\1,\ \1^TP=\1^T,\ P\geq0 \\
& X = \brac{x_{ij}} = PY \\
& \sum_{i} \alpha_i^{(k,c,\theta)}x_{i,j} = 0 \qquad \forall j,c,\theta \\
\end{array}
\label{eqn:eq_reformulate_problem_permutation}
\end{equation}
where $\1 \in \Real^N$ is the all one vector. This is again a non-convex optimization, as it maximizes norm.

We generate approximate solutions to \eqref{eqn:eq_reformulate_problem_permutation} by linearizing its quadratic objective term and applying an iterative algorithm, in similar spirit to reweighted optimization approaches (\eg, reweighted $\ell_1$ minimization \cite{Candes:2008}).

Note that $\norm{P}_F^{2}=\tr\parr{P^TP}=\ip{P,P}$, where $\ip{A,B}=\tr\parr{B^TA}$ is the standard matrix inner product. Suppose that $\hat{P}$ is an approximate guess of $P$. We linearly approximate the quadratic objective as $\langle\hat{P},P\rangle$ and iterate solving the following \emph{linear program} (LP):
\begin{equation}
\begin{array}{ll}
\maximize & \ip{P^{(n-1)},P} \\
\st & P\1=\1,\ \1^TP=\1^T,\ P\geq0 \\
& X = \brac{x_{ij}} = PY \\
& \sum_{i} \alpha_i^{(k,c,\theta)}x_{i,j} = 0 \qquad \forall j,c,\theta \\
\end{array}
\label{eqn:eq_relax_problem_permutation}
\end{equation}
where $P^{(n-1)}$ is the optimizer of the previous iteration. The algorithm is outlined in Algorithm~\ref{alg:solve_approx_lp}.

Note that a global optimizer of \eqref{eqn:eq_reformulate_problem_permutation} is a fixed point of \eqref{eqn:eq_relax_problem_permutation}. We  initialize \eqref{eqn:eq_relax_problem_permutation} with $P^{(0)}=0$. With this initialization \eqref{eqn:eq_relax_problem_permutation} is equivalent to the standard LP relaxation of the feasibility problem \eqref{eqn:eq_equiv_problem_permutation} over the convex hull of permutations. This sequence of optimization problems attains a bounded and monotonically non-decreasing objective. However, convergence to a permutation is not guaranteed.

\begin{algorithm}
\caption{Iterative LP approximation for puzzles with preset piece locations}
\KwIn{
\ \ Polynomial representation coefficients $\alpha_i^{(k,c,\theta)}$ \\
\qquad \qquad \ \ Preset locations matrix $Y$ as in \eqref{eqn:preset_locations_Y}
}
\KwOut{Puzzle solution matrix $X$}
\BlankLine
\BlankLine
Initialize $P^{(0)}=0;\ n = 0$\;
\Repeat{$P^{(n)}$ is a permutation}
{
$n=n+1$\;
Solve the linear program \eqref{eqn:eq_relax_problem_permutation} with $P^{(n-1)}$\;
Set $P^{(n)}$ to be the optimizer\;
}
\Return{$X=P^{(n)}Y$}\;
\label{alg:solve_approx_lp}
\end{algorithm}

\paragraph{Examples} We demonstrate the proposed approach by solving examples of edge-matching puzzles. Specifically, we generate 2-dimensional square tiling puzzles of various sizes whose edge colors are drawn at random,  see Figures~\ref{fig:example_6x6} and~Figure~\ref{fig:example_8x8}.

We use the polynomial representation described in Section~\ref{section:poly_representation} to calculate the coefficients of a corresponding polynomial system of
equations and apply Algorithm~\ref{alg:solve_approx_lp}. Figure~\ref{fig:example_6x6} shows the method applied to a $6\times6$ puzzle. Only 2 iterations of \eqref{eqn:eq_relax_problem_permutation} were required to attain a permutation, that corresponds to the sole solution of the puzzle.

Figure~\ref{fig:example_8x8} depicts the method applied to an $8\times8$ puzzle. 6 iterations of \eqref{eqn:eq_relax_problem_permutation}, shown at the bottom of the figure, were required to attain a permutation. In this case, the algorithm collapses into one of the two solutions of this puzzle.
\begin{figure}[t]
\centering
\begin{subfigure}[b]{0.49\textwidth}
  \centering
  \includegraphics[width=0.7\columnwidth]{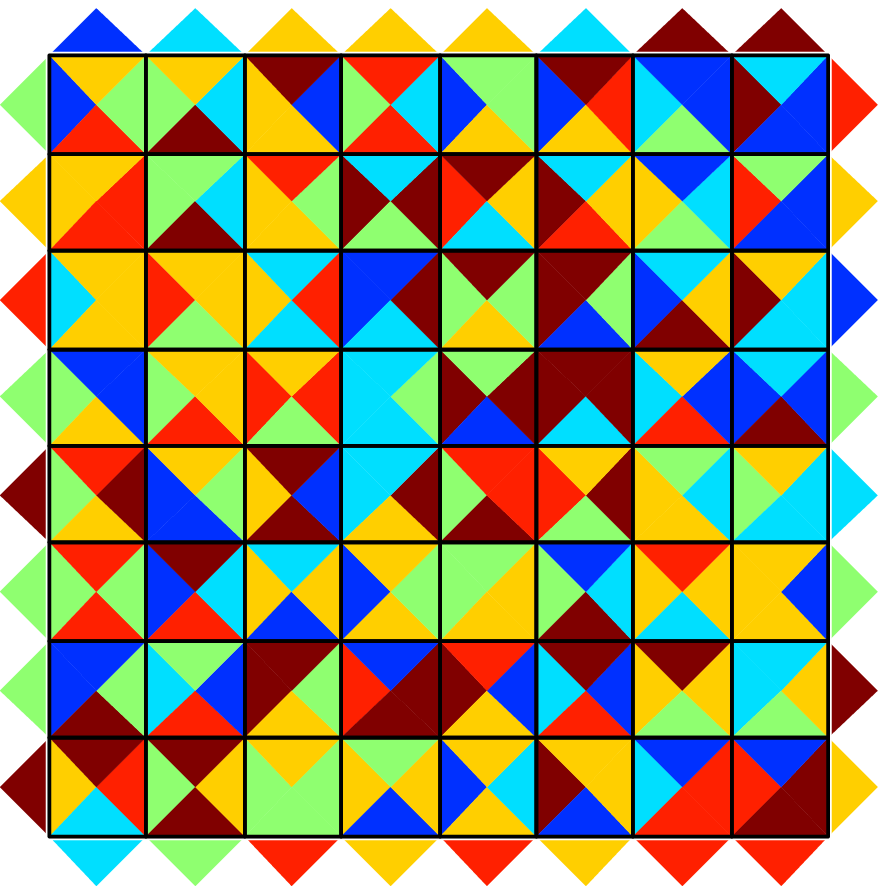}
  \caption{Scrambled}
\end{subfigure}
\begin{subfigure}[b]{0.49\columnwidth}
  \centering
  \includegraphics[width=0.7\columnwidth]{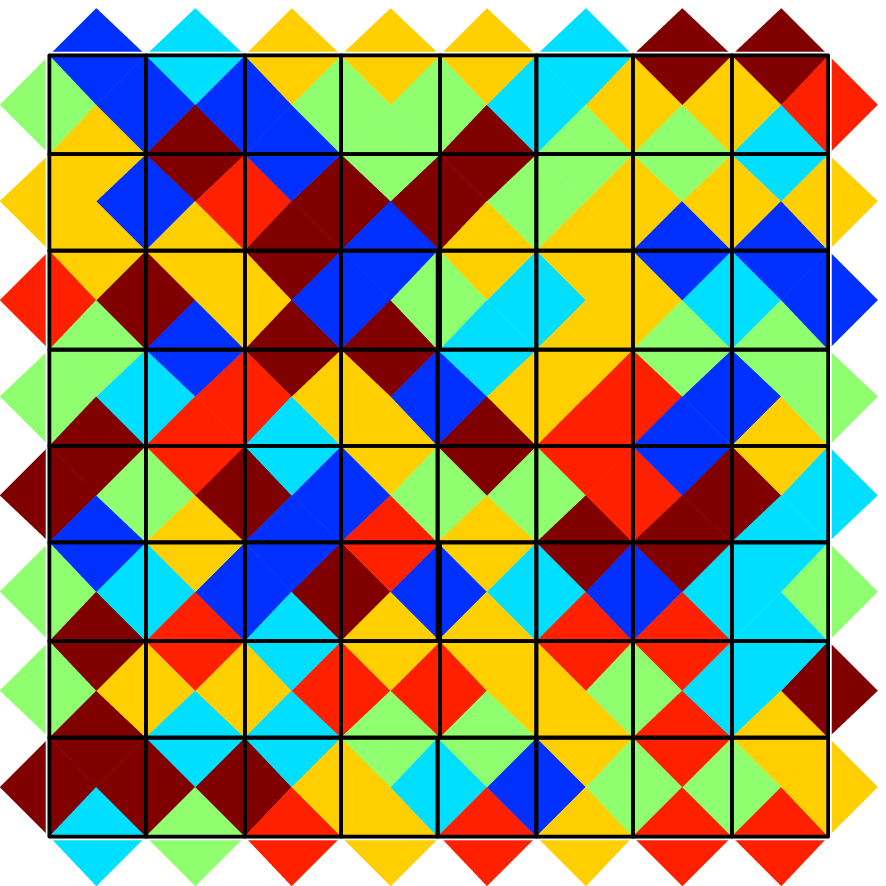}
  \caption{Solved}
\end{subfigure}
\\
\begin{subfigure}[b]{1\columnwidth}
  \centering
  \includegraphics[width=1\columnwidth]{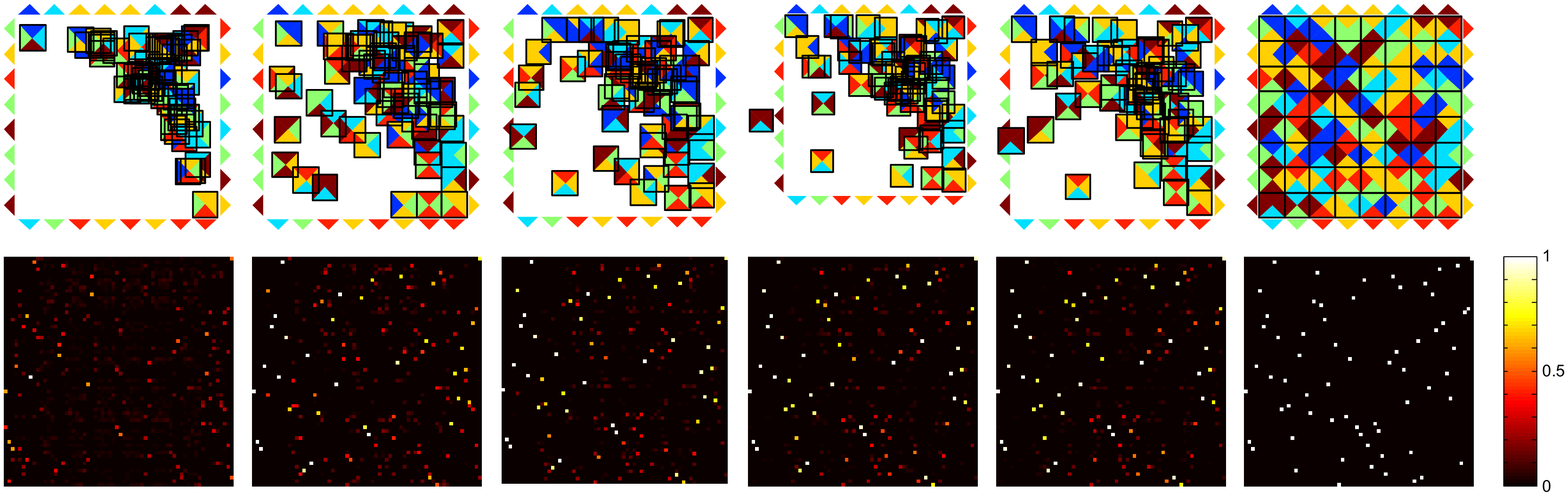}
  \caption{Iterations of \eqref{eqn:eq_relax_problem_permutation}, left to right. Top row shows intermediate puzzle solutions. Bottom row shows the state of the variable $P$ (a $64\times 64$ matrix), converging to a permutation matrix. White entries indicate a value of 1, black is 0 and intermediate colors correspond to fractional values.}
\end{subfigure}
\caption{Solution of an $8 \times 8$ puzzle using our method.}\label{fig:example_8x8}
\end{figure}

\subsubsection{Rank One Reformulation}
\label{section:reformulation_rank_one}
For certain types of puzzles the set of valid locations of the pieces in a solution is not known in advance. One such example, where due to the irregularity of piece shapes their possible locations cannot be predetermined, is the \emph{Tangram} puzzle (Figures~\ref{fig:example_tangram_A} and \ref{fig:example_tangram_B}) In this section we propose an alternative reformulation of~\eqref{eqn:eq_poly_system} as an optimization over rank one matrices.

Optimization subject to rank one constraints, or more generally low rank optimization, has recently received substantial interest (\eg, \cite{Orsi2006,Kulis2007,Mitra2010,Recht2010}). We suggest a convex relaxation algorithm for generating approximate solutions to our rank one formulation, which does not require the piece locations to be known in advance.

Define $N$ \emph{Hankel} matrices $Z_1,\ldots,Z_N$ of the form
\begin{equation}
Z_i =
\begin{bmatrix}1 & z_{i,1} & z_{i,2} & \cdots & z_{i,K} \\
z_{i,1} & z_{i,2} &  & \iddots & z_{i,K+1} \\
z_{i,2} &  & \iddots &  & \vdots \\
\vdots & \iddots &  & \iddots &  \\
z_{i,K} & z_{i,K+1} & \cdots &  & z_{i,2K} \\
\end{bmatrix}.
\label{eqn:eq_hankel_matrix}
\end{equation}
We state the following problem
\begin{equation}
\begin{array}{ll}
\find & Z_1,\ldots,Z_N \text{ of the form \eqref{eqn:eq_hankel_matrix}} \\
\st & \sum_{i} \alpha_i^{(k,c,\theta)}z_{i,j} = 0 \qquad \forall j,c,\theta\\
& z_{i,j} \geq 0 \qquad \forall i,j \\
& \rank(Z_i)=1 \qquad \forall i. \\
\end{array}
\label{eqn:eq_equiv_problem_hankel}
\end{equation}
It is clear that if $T_1,\ldots,T_N$ is a solution of the puzzle then the assignment
\begin{equation}
Z_i =
\begin{bmatrix}1 & T_i & T_i^2 & \cdots & T_i^K \\
T_i & T_i^2 &  & \iddots & T_i^{K+1} \\
T_i^2 &  & \iddots &  & \vdots \\
\vdots & \iddots &  & \iddots &  \\
T_i^{K} & T_i^{K+1} & \cdots &  & T_i^{2K} \\
\end{bmatrix}
=
\begin{bmatrix}
1 \\ T_i \\ T_i^2 \\ \vdots \\ T_i^K \\
\end{bmatrix}
\begin{bmatrix}
1 \\ T_i \\ T_i^2 \\ \vdots \\ T_i^K \\
\end{bmatrix}^T
\label{eqn:eq_hankel_matrix_Ts}
\end{equation}
is a solution of problem \eqref{eqn:eq_equiv_problem_hankel}. To show that \eqref{eqn:eq_equiv_problem_hankel} is in fact equivalent to the problem of solving the polynomial system \eqref{eqn:eq_poly_system} we must show that any solution to \eqref{eqn:eq_equiv_problem_hankel} is also a solution of the polynomial system \eqref{eqn:eq_poly_system}.

Towards this end, we use the following simple lemma:
\bigskip
\begin{lemma}
\ $\rank(Z_i)=1$ and $z_{i,j}\geq0$ if and only if $z_{i,j} = z_{i,1}^j$ for all $j$.
\end{lemma}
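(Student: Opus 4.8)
The plan is to prove both implications, with the forward (rank one $\Rightarrow$ power) direction carrying essentially all of the content. For the easy direction, suppose $z_{i,j}=z_{i,1}^{j}$ for all $j$. Setting $v=\parr{1,\,z_{i,1},\,z_{i,1}^{2},\,\ldots,\,z_{i,1}^{K}}^{T}$, the $(a,b)$ entry of $vv^{T}$ is $z_{i,1}^{a}z_{i,1}^{b}=z_{i,1}^{a+b}=z_{i,a+b}$, which is exactly the $(a,b)$ entry of the Hankel matrix $Z_i$ (under the convention $z_{i,0}=1$). Hence $Z_i=vv^{T}$ has rank one, and its entries $z_{i,1}^{j}$ are nonnegative precisely because $z_{i,1}\geq0$.

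For the converse I would invoke the standard fact that a nonzero matrix has rank one if and only if all of its $2\times2$ minors vanish (here $Z_i$ is nonzero since its top-left entry is $z_{i,0}=1$). The key step is to anchor these minors at this unit entry: the $2\times2$ minor on rows $\set{0,a}$ and columns $\set{0,b}$ equals
\[
z_{i,0}\,z_{i,a+b}-z_{i,a}\,z_{i,b}=z_{i,a+b}-z_{i,a}\,z_{i,b},
\]
so the rank one condition forces the multiplicative relation $z_{i,a+b}=z_{i,a}\,z_{i,b}$ for all $a,b\in\set{0,\ldots,K}$. A short strong induction then yields $z_{i,j}=z_{i,1}^{j}$ for every $j$: the cases $j=0$ and $j=1$ are immediate, and for $2\leq j\leq 2K$ one may always split $j=a+b$ with $a,b\in\set{1,\ldots,K}$ (for instance $a=\max(1,j-K)$ and $b=j-a$, both of which lie in $\set{1,\ldots,K}$ and are strictly below $j$), whence $z_{i,j}=z_{i,a}\,z_{i,b}=z_{i,1}^{a}z_{i,1}^{b}=z_{i,1}^{j}$.

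The role of the nonnegativity constraint is then to single out the correct branch. Rank one alone already gives $z_{i,j}=z_{i,1}^{j}$, but it admits sign-alternating Hankel solutions such as $z_{i,j}=(-1)^{j}$ (corresponding to $z_{i,1}=-1$); imposing $z_{i,j}\geq0$ forces $z_{i,1}\geq0$, so that the recovered value $z_{i,1}$ is a legitimate $T_i=\Exp{t_i}$. This is exactly what is needed downstream, since it lets us identify $z_{i,j}=T_i^{j}$ and collapse the linear constraints of \eqref{eqn:eq_equiv_problem_hankel} back onto the polynomial system \eqref{eqn:eq_poly_system}.

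I expect the main (indeed only) obstacle to be recognizing the anchoring trick, namely that evaluating the $2\times2$ minors against the unit top-left entry converts the rank condition into the Cauchy-type functional equation $z_{i,a+b}=z_{i,a}\,z_{i,b}$; once this is in hand the power structure follows by routine induction, and verifying that the index split $j=a+b$ is always available is an elementary combinatorial check. A less careful approach that manipulates general $2\times2$ minors of the Hankel matrix (or appeals to its full determinantal ideal) would obscure this and make the induction needlessly awkward.
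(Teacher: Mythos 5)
Your proof is correct and is essentially the paper's argument in expanded form: the paper's one-line observation that $Z_i$ must be the outer product of its first column with itself is exactly your anchored $2\times 2$ minor identity $z_{i,a+b}=z_{i,a}z_{i,b}$, from which the power structure follows by the same short induction. The only real difference is that you spell out the index-splitting induction and the role of nonnegativity in forcing $z_{i,1}\geq 0$ (and correctly flag that rank one alone admits the sign-alternating branch), details the paper leaves implicit.
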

\bigskip

The lemma follows immediately by noticing that $Z_i$ must be an outer product of its first column with itself. This implies that, under the constraints of problem \eqref{eqn:eq_equiv_problem_hankel}, the matrix $Z_i$ is simply the outer product of the $i$'th row of the Vandermonde matrix \eqref{eqn:vandermonde_matrix} with itself. Thus, every solution of the low-rank problem \eqref{eqn:eq_equiv_problem_hankel} is a solution of the Vandermonde problem \eqref{eqn:eq_equiv_problem_vandermonde}, and in turn of the original polynomial system \eqref{eqn:eq_poly_system}.

Therefore, \eqref{eqn:eq_equiv_problem_hankel} is an alternative formulation to the problem of solving puzzles. It can be interpreted as finding a solution of a linear system over non-linear manifolds of rank one matrices.

\paragraph{Convex Relaxation} Problem \eqref{eqn:eq_equiv_problem_hankel}
is a non-convex feasibility problem. Note that its constraints $\rank(Z_i)=1$ and $z_{i,j}\geq0$ imply that $Z_i$ is a positive semidefinite matrix, which we denote by $Z_i\succeq0$. Namely, since $Z_i$ is a symmetric rank one matrix, it admits a rank one eigendecomposition of the form $Z_i=\lambda_i u_i u_i^T$ for some vector $u_i$. Having non-negative entries implies that $\lambda_i\geq0$, and in turn that $Z_i\succeq0$.

Dattorro \cite{Dattorro2005} discusses a semidefinite programming (SDP) heuristic for rank-constrained optimization. Inspired by his work, we propose an iterative approximate procedure. Intuitively, the idea is that a rank one symmetric positive semidefinite matrix can be characterized as one that minimizes the sum of all eigenvalues but the largest. Thus, a feasibility problem of the form \eqref{eqn:eq_equiv_problem_hankel} can be cast as an eigenvalue minimization problem. Since the resulting problem is not convex we generate approximate solutions by employing local linearization and solving a sequence of optimization problems. We iteratively solve the following SDP:
\begin{equation}
\begin{array}{ll}
\minimize & \ip{W_1,Z_1}+\cdots+\ip{W_N,Z_N} \\
\st
& Z_1,\ldots,Z_N \text{ are of the form \eqref{eqn:eq_hankel_matrix}}\\
& \sum_{i} \alpha_i^{(k,c,\theta)}z_{i,j} = 0 \qquad \forall j,c,\theta\\
& Z_i\succeq0 \qquad \forall i \\
\end{array}
\label{eqn:eq_equiv_problem_hankel_SDP}
\end{equation}
where $W_i$ are fixed matrices, updated at each iteration according to
\begin{equation}
W_i = V_i
\begin{bmatrix}0 &  &   & 0 \\
  & 1 &   &   \\
  &   & \ddots &   \\
0 &   &   & 1 \\
\end{bmatrix}
V_i^T
\label{eqn:eq_equiv_problem_hankel_SDP_Wi_update}
\end{equation}
and $Z_i^{(n-1)}=V_i\Lambda_iV_i^T$ is the eigendecomposition of $Z_i^{(n-1)}$, the optimizer of the previous iteration, with eigenvalues sorted in descending order.

Note that the functional of \eqref{eqn:eq_equiv_problem_hankel_SDP} is non-negative, as  $\set{Z_i}$ and $\set{W_i}$ are all positive semidefinite. Moreover, it vanishes if and only if the rank of each of the matrices $Z_i$ is at most one. Therefore, it is easy to see that a solution of \eqref{eqn:eq_equiv_problem_hankel} is a global optimizer of \eqref{eqn:eq_equiv_problem_hankel_SDP}, as well as a fixed point of the suggested iterative procedure (see \cite{Dattorro2005} for additional details).

We initialize the iterative procedure with $W_i=0$, which reduces \eqref{eqn:eq_equiv_problem_hankel_SDP} to a standard SDP relaxation for the rank one feasibility problem \eqref{eqn:eq_equiv_problem_hankel}. The algorithm is outlined in Algorithm~\ref{alg:solve_approx_sdp}. As in Section~\ref{section:reformulation_vandermonde}, convergence to a global minimum is not guaranteed.

\begin{algorithm}
\caption{Iterative SDP approximation for puzzles}
\KwIn{\ \ Polynomial representation coefficients $\alpha_i^{(k,c,\theta)}$\\
\qquad \qquad \ \ tolerance $\varepsilon$}
\KwOut{Puzzle solution matrices $Z_1,\ldots,Z_N$ }
\BlankLine
\BlankLine
Initialize $W_1,\ldots,W_N=0;\ n = 0$\;
\Repeat{$\ip{W_1,Z_1^{(n)}}+\cdots+\ip{W_N,Z_N^{(n)}}\leq\varepsilon$}
{
$n=n+1$\;
Solve the semidefinite program \eqref{eqn:eq_equiv_problem_hankel_SDP} with $W_1,\ldots,W_N$\;
Set $Z_1^{(n)},\ldots,Z_N^{(n)}$ to be the optimizer\;
Calculate the eigendecompositions $Z_i^{(n)}=V_i\Lambda_iV_i^T$\;
Update $\{W_i\}$ according to \eqref{eqn:eq_equiv_problem_hankel_SDP_Wi_update}\;
}
\Return{$Z_1=Z_1^{(n)},\ldots,Z_N=Z_N^{(n)}$}\;
\label{alg:solve_approx_sdp}
\end{algorithm}

\paragraph{Examples}
To demonstrate the proposed approach, we applied the iterative algorithm to the problem of solving \emph{Tangram} puzzles. Figures~\ref{fig:example_tangram_A} and~\ref{fig:example_tangram_B} show two instances of tangram puzzles. In these puzzles one aims to tile a given shape (b) with the puzzle pieces (a). Thus, it can be seen as an edge-matching problem in which all edges share the same color. In the figures, we color each piece in a different color for visualization only.

The coefficients of a corresponding polynomial system of equations were calculated as described in Section~\ref{section:poly_representation}. (The calculations were slightly adapted to allow for non-equilateral polygonal pieces, see Section~\ref{sect:extensions_piece_shape} for additional technical details.) Then, the solution (c) was obtained by applying Algorithm~\ref{alg:solve_approx_sdp}.
(d) shows the iterations of the algorithm until convergence. Note that no assumptions on the locations of the puzzle pieces were made, in contrast to the algorithm derived in Section~\ref{section:reformulation_vandermonde}, which constrains puzzle pieces to preset locations.

\begin{figure}[t]
\centering
\begin{subfigure}[b]{0.4\textwidth}
  \centering
  \includegraphics[scale=0.7]{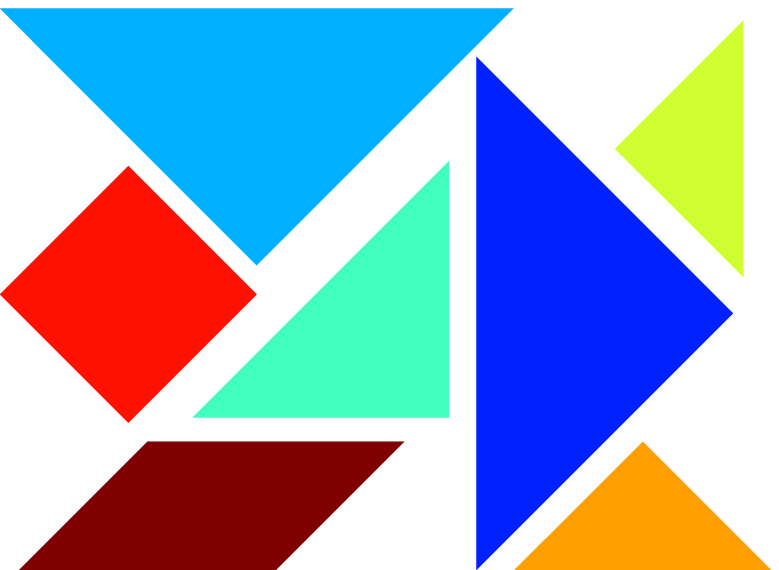}
  \caption{Puzzle pieces}
\end{subfigure}
\begin{subfigure}[b]{0.26\textwidth}
  \centering
  \includegraphics[scale=0.7]{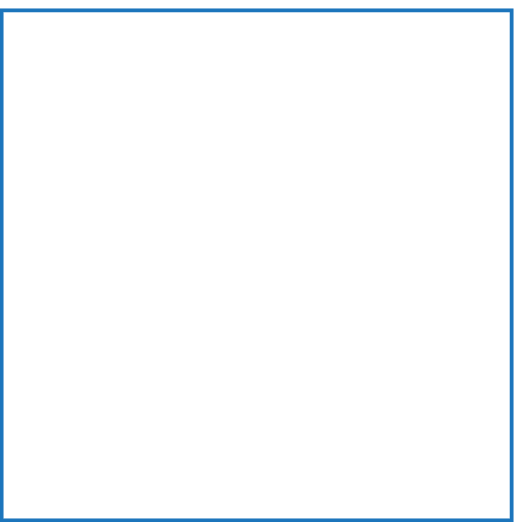}
  \caption{Frame}
\end{subfigure}
\begin{subfigure}[b]{0.26\textwidth}
  \centering
 \includegraphics[scale=0.7]{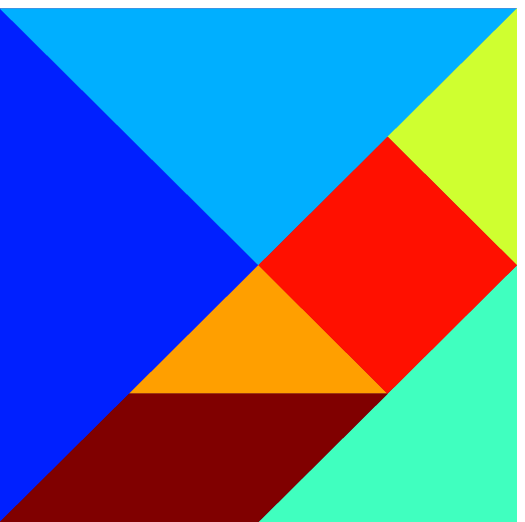}
  \caption{Solution}
\end{subfigure}
\\
\begin{subfigure}[b]{1\columnwidth}
  \centering
  \includegraphics[width=1\columnwidth]{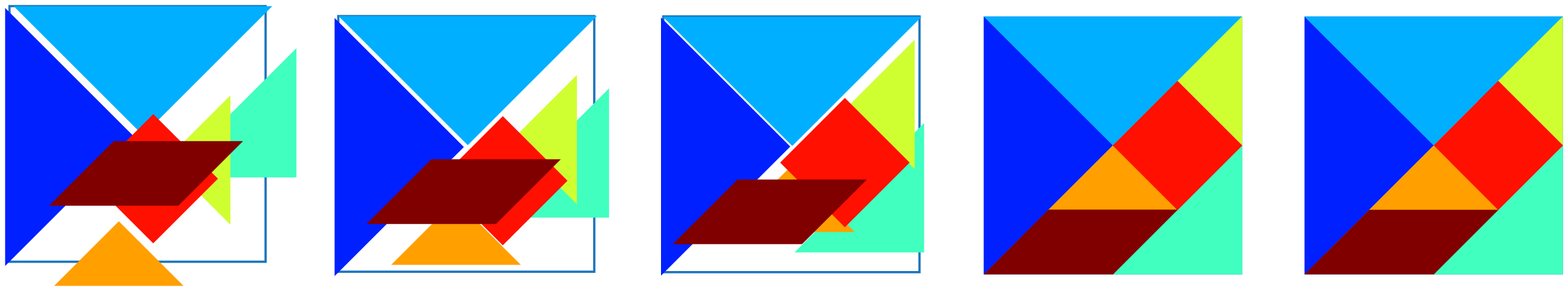}
  \caption{Iterations of \eqref{eqn:eq_equiv_problem_hankel_SDP}, left to right. Converging to a solution of the Tangram.}
\end{subfigure}
\caption{Solution of a Tangram puzzle using our method. In this edge-matching problem all edges share the same color, in the figure pieces are colored for visualization only.}
\label{fig:example_tangram_A}
\end{figure}

\begin{figure}[t]
\centering
\begin{subfigure}[b]{0.28\textwidth}
  \centering
  \includegraphics[scale=0.65]{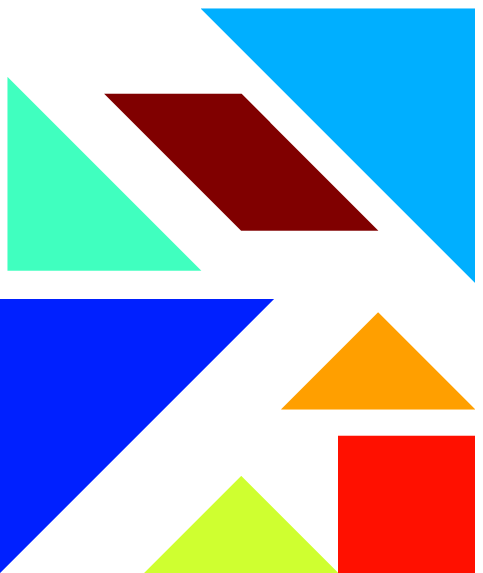}
  \caption{Puzzle pieces}
\end{subfigure}
\begin{subfigure}[b]{0.35\textwidth}
  \centering
  \includegraphics[scale=0.65]{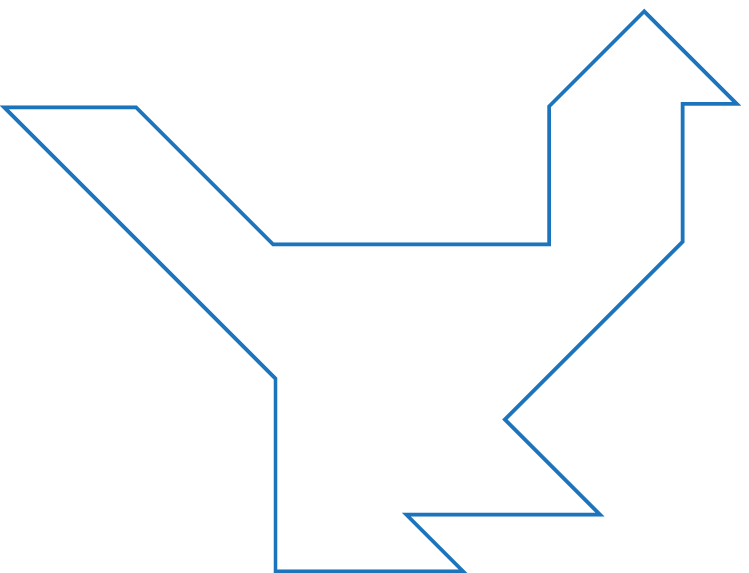}
  \caption{Frame}
\end{subfigure}
\begin{subfigure}[b]{0.35\textwidth}
  \centering
 \includegraphics[scale=0.65]{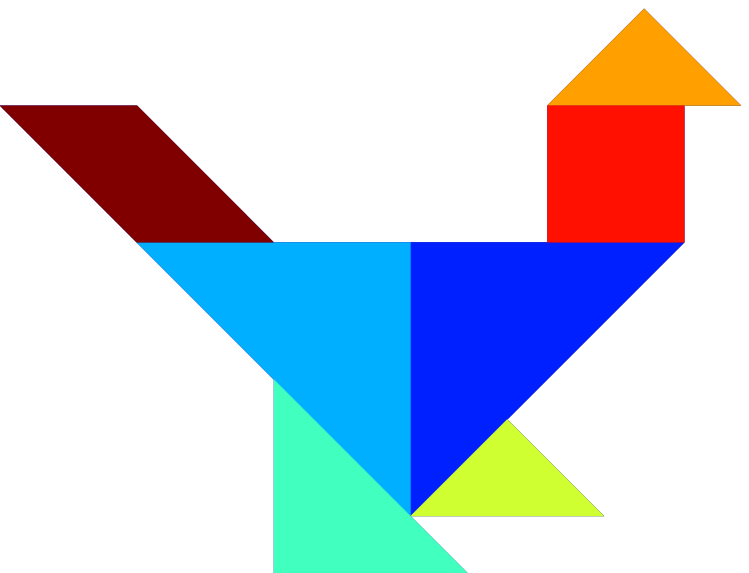}
  \caption{Solution}
\end{subfigure}
\\
\begin{subfigure}[b]{1\columnwidth}
  \centering
 \includegraphics[width=1\columnwidth]{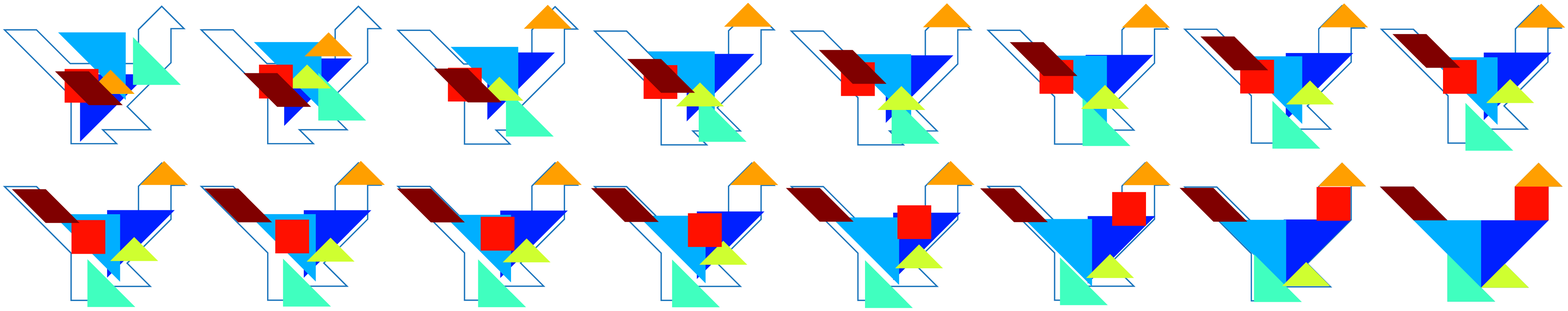}
  \caption{Iterations of \eqref{eqn:eq_equiv_problem_hankel_SDP}, top-left to bottom-right. Converging to a solution of the Tangram.}
\end{subfigure}
\caption{Solution of a Tangram puzzle using our method.}\label{fig:example_tangram_B}
\end{figure}

\subsubsection{Implementation Details and Limitations}
\label{section:implmenetation_details}
Algorithms \ref{alg:solve_approx_lp} and \ref{alg:solve_approx_sdp} were both implemented in Matlab. Yalmip was used for the modeling of the linear program~\eqref{eqn:eq_relax_problem_permutation} and semidefinite program~\eqref{eqn:eq_equiv_problem_hankel_SDP}. Gurobi~\cite{Gurobi} and Mosek~\cite{Mosek}, correspondingly, were used for their optimization.

In theory both LPs and SDPs are solvable in polynomial time. In practice, Algorithm 1 scales moderately well with the number of pieces. We have successfully solved instances of $12\times 12$ puzzles. We haven't had as much success with larger problems, such as the \emph{``Eternity II"} puzzle, which pose a considerable combinatorial challenge. Algorithm 2 has limited scalability, which is dominated by the number and dimension of the positive definite constraints in SDP~\eqref{eqn:eq_equiv_problem_hankel_SDP}. These, in turn, depend on the number of puzzle pieces and edge colors respectively. Current leading SDP optimization engines employ second order interior-point algorithms which limit the applicability of this approach to larger problems.

 \section{Extended Framework} \label{sect:extensions}
In this section we describe extensions of our framework to (i) higher-dimensional puzzles, (ii) general polygonal pieces and (iii) puzzles in which pieces can be rotated to one of a finite set of discrete orientations.

\subsection{High-dimensional Puzzles} \label{sect:extensions_highdim}
Thus far we restricted our attention to 2-dimensional puzzles. We can extend our framework to dimensions $d>2$ as follows.
Consider coordinates $t_i,b_{i,j}\in\Real^d$ and real valued functions $f:\Real^d\to\Real$. Proposition~\ref{proposition:basic_condition} continues to hold, implying as in the 2-dimensional case that if $t_1,\ldots,t_N$ is a solution of the puzzle then
\begin{equation}
\sum_{i,j} s_{i,j}(c,\theta) f\parr{t_i+b_{i,j}} = 0
\end{equation}
for every $(c,\theta)$ and every function $f$.

The polynomial representation proposed in Section~\ref{proposition:equivalent_representation} takes exactly the same form, now by taking $f(u)=\Exp{k^{T}u}$ where $k$ is a $d$-tuple determining the degree of the resulting multivariate polynomial.

The rest of the derivations introduced in the paper follow in the exact same manner with the exception of Proposition~\ref{proposition:equivalent_representation}, whose proof is restricted to the 2-dimensional case (as it relies on identifying $\Real^2$ with $\Complex$.). We conjecture that a similar result holds in arbitrary dimension as well.

\subsection{Shape} \label{sect:extensions_piece_shape}
So far we have assumed puzzle pieces to be \emph{equilateral} polygons. This assumption can be relaxed to the case of general polygons, with arbitrary edge length, by introducing path integrals over the boundary of pieces. When puzzle pieces are not equilateral, Proposition~\ref{proposition:basic_condition} may no longer hold since for example puzzle edges are no longer restricted to pair as a whole with a single other edge. That is, in the solution to the puzzle one edge may be paired with multiple edges or even parts of edges. Nevertheless, an analogous proposition holds provided that \eqref{eqn:eq_basic_condition} is replaced with
\begin{equation}
\sum_{i,j} s_{i,j}(c,\theta) \int_{\gamma_{i,j}} f\parr{t_i+z}dz = 0
\label{eqn:eq_basic_condition_integral}
\end{equation}
where $\gamma_{i,j}$ is the path corresponding to the $j$'th edge of $i$'th piece. The derivations of Sections~\ref{section:probem_statement} and~\ref{section:alg_representation} follow by making the appropriate adaptations. Note that the Tangram examples shown in Figures~\ref{fig:example_tangram_A} and~\ref{fig:example_tangram_B} involve non equilateral polygons. Indeed the algebraic representation which we used to solve them was computed by calculating the path integrals of~\eqref{eqn:eq_basic_condition_integral}.

\subsection{Orientations}
Often one wishes to solve a puzzle for which the orientation of the pieces is also unknown. We now show how the proposed framework can be extended to address this case, under the assumption that finitely many orientations are admissible.

For simplicity, we focus on the 2-dimensional case. Let $\varphi_i$ denote the unknown orientation of the $i$'th piece. Suppose that each orientation $\varphi_i$ belongs to a finite cyclic group of $r$ rotations, namely $\varphi_i \in \set{0,\frac{2\pi }{r},\ldots,\frac{2\pi }{r}(r-1)}$. This is a reasonable assumption for many puzzles (\eg, $90^\circ$ rotations are sufficient for square puzzles, $60^\circ$ rotations for hexagonal puzzles, etc).

Denote by $\R{\theta}$ the rotation matrix
\begin{equation}
\R{\theta} = \begin{bmatrix}\cos(\theta) & \sin(\theta) \\
-\sin(\theta) & \cos(\theta) \\
\end{bmatrix}.
\end{equation}
Next, we prove a claim that allows to establish an algebraic representation in the presence of unknown orientations. The intuitive interpretation is that we duplicate the entire puzzle (frame and pieces) $r$ times, rotated in each of the orientations $\set{0,\frac{2\pi }{r},\ldots,\frac{2\pi }{r}(r-1)}$. Then, all $r$ puzzles are simultaneously solved as a single augmented translation only puzzle, with the location $t_i^{(\rho)}$ of the $\rho$-rotated $i$'th piece linearly constrained by $t_i^{(\rho)}=\R{\frac{2\pi }{r}\rho}t_i$. A solution of this augmented puzzle, under the assumption that pieces may only translate, implies a solution of the original puzzle (with unknown orientations). Namely, a claim analogous to Proposition~\ref{proposition:equivalent_representation} holds, by summing over $r$ rotated copies of the puzzle:

\bigskip
\begin{proposition} \ Suppose that a given puzzle is solved by placing the pieces at the locations $\hat{t}_1,\ldots,\hat{t}_N$ rotated by $\varphi_1,\ldots,\varphi_N$. Then,
\begin{equation}
\sum_{\rho=0}^{r-1}\sum_{i,j} s_{i,j}(c,\theta-\frac{2\pi}{r}\rho) f\parr{\R{\frac{2\pi }{r}\rho}\parr{t_i+b_{i,j}}} = 0
\label{eqn:eq_basic_condition_wrotation}
\end{equation}
for every $(c,\theta)$ and every function $f$, with $t_i=\R{-\varphi_i}\hat{t}_i$.
\label{proposition:basic_condition_rotations}
\end{proposition}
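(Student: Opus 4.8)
The goal is to prove Proposition~\ref{proposition:basic_condition_rotations}, which asserts an algebraic identity summed over the $r$ rotated copies of the puzzle. The natural strategy is to reduce the rotated puzzle to a translation-only puzzle in an augmented configuration, and then invoke Proposition~\ref{proposition:basic_condition} on that augmented puzzle. First I would make precise the intuitive ``duplication'' described before the statement: given a solution with pieces placed at $\hat{t}_1,\ldots,\hat{t}_N$ and rotated by $\varphi_1,\ldots,\varphi_N$, I would introduce, for each rotation index $\rho\in\{0,\ldots,r-1\}$, a rotated copy of the whole puzzle. The $\rho$-th copy places the $i$'th piece at location $\R{\frac{2\pi}{r}\rho}\hat{t}_i$ with its edges also rotated by $\frac{2\pi}{r}\rho$. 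The key reduction is the substitution $t_i=\R{-\varphi_i}\hat{t}_i$ stated in the proposition, which ``unrotates'' each placed piece back to its canonical (unrotated) description.

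**Key steps.** The main point is that solving the original puzzle with unknown orientations is equivalent to finding, among the $r$ rotated copies of each piece, one copy per piece whose edges all match correctly. Concretely, I would argue as follows. For a fixed edge type $(c,\theta)$ in the original puzzle, an edge of color $c$ and orientation $\theta$ on a piece rotated by $\frac{2\pi}{r}\rho$ corresponds, in the unrotated description, to an edge of color $c$ and orientation $\theta-\frac{2\pi}{r}\rho$; this is exactly why the signed indicator in~\eqref{eqn:eq_basic_condition_wrotation} is evaluated at $s_{i,j}(c,\theta-\frac{2\pi}{r}\rho)$ and the function argument carries the rotation $\R{\frac{2\pi}{r}\rho}$. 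I would then apply Proposition~\ref{proposition:basic_condition} to the augmented translation-only puzzle: since the pieces genuinely form a solution of the puzzle (every edge pairs with a matching edge), the set-equality~\eqref{eqn:eq_edges_set} holds for every color/orientation combination, so $\sum_{i,j} s_{i,j}(c',\theta')\,f(p_{i,j})=0$ for each orientation class, where $p_{i,j}$ is the absolute edge location. Summing these identities over $\rho=0,\ldots,r-1$, with $\theta'=\theta-\frac{2\pi}{r}\rho$ and the edge locations expressed via $\R{\frac{2\pi}{r}\rho}(t_i+b_{i,j})$ after the substitution $t_i=\R{-\varphi_i}\hat{t}_i$, collapses precisely to~\eqref{eqn:eq_basic_condition_wrotation}.

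**Bookkeeping.** The delicate part is the orientation bookkeeping: I must verify that as $\rho$ ranges over the cyclic group, the rotated copy whose orientation agrees with the piece's actual orientation $\varphi_i$ contributes the genuinely matched edges, while the other copies contribute edges that either match among themselves (across the duplicated puzzles) or cancel via the signed indicator. The cleanest way to see this is to observe that rotation by $\frac{2\pi}{r}\rho$ is an isometry that maps the original matching condition~\eqref{eqn:eq_edges_set} at type $(c,\theta-\frac{2\pi}{r}\rho)$ to a matching condition at type $(c,\theta)$ for the $\rho$-rotated copy, so each summand over $\rho$ is an instance of Proposition~\ref{proposition:basic_condition} applied to a legitimately matched (rotated) configuration. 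Because the indicator $s_{i,j}$ is $\pm1$ exactly on edges of orientation $\theta-\frac{2\pi}{r}\rho$ or its opposite and $0$ otherwise, each copy's contribution is well-defined and the summand vanishes individually; summing then gives the result.

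**Main obstacle.** I expect the chief difficulty to be purely notational rather than conceptual: correctly tracking how an edge's orientation transforms under $\R{\frac{2\pi}{r}\rho}$ and confirming that the shifted argument $(c,\theta-\frac{2\pi}{r}\rho)$ of the signed indicator exactly compensates for this transformation, so that the single matched orientation $\varphi_i$ lands in the correct summand. Once the identity $t_i=\R{-\varphi_i}\hat{t}_i$ is in place and the rotation is recognized as a measure-preserving relabeling of edge types, the proof is a direct term-by-term application of Proposition~\ref{proposition:basic_condition} to each rotated copy followed by summation; no new analytic input is required.
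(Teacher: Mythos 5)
Your high-level route --- formalize the ``duplicate the puzzle $r$ times'' picture, substitute $t_i=\R{-\varphi_i}\hat{t}_i$, and reduce to Proposition~\ref{proposition:basic_condition} --- is the same as the paper's, but the step you yourself identify as the crux is carried out incorrectly. You claim that each fixed-$\rho$ summand of~\eqref{eqn:eq_basic_condition_wrotation},
\begin{equation*}
S_\rho=\sum_{i,j} s_{i,j}\parr{c,\theta-\tfrac{2\pi}{r}\rho}\, f\parr{\R{\tfrac{2\pi}{r}\rho}\parr{t_i+b_{i,j}}},
\end{equation*}
``vanishes individually'' as an instance of Proposition~\ref{proposition:basic_condition} applied to a legitimately matched rotated configuration. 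This is false whenever the pieces have distinct orientations. The indicator in $S_\rho$ is evaluated at the \emph{intrinsic} orientations $\theta_{i,j}$, whereas matching in the solved puzzle pairs an edge of solved orientation $\theta_{i,j}+\varphi_i$ with one of solved orientation $\theta_{\hat i,\hat j}+\varphi_{\hat i}$; consequently a $(+1)$-edge of piece $i$ appearing in slice $\rho$ is cancelled by its matching partner on piece $\hat i$, which appears in slice $\rho+\tfrac{r}{2\pi}(\varphi_{\hat i}-\varphi_i)\ (\Mod\ r)$, not in slice $\rho$. Concrete counterexample: $r=2$, a square frame and a single square piece that fits only after rotation by $\pi$ (so $\varphi_1=\pi$), $f\equiv1$, and $(c,\theta)$ the type of the frame's top edge. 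The frame edge and the piece edge then have the \emph{same} intrinsic type, so both receive sign $+1$ in $S_0$ and sign $-1$ in $S_1$, giving $S_0=2$, $S_1=-2$: only the sum over $\rho$ vanishes. So $S_\rho$ is not an instance of Proposition~\ref{proposition:basic_condition} for any matched configuration (it corresponds to placing the \emph{unrotated} pieces at $t_i$, which is not a solution), and your term-by-term argument fails.

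The missing ingredient is precisely the group re-indexing the paper performs. What does vanish individually is the Proposition~\ref{proposition:basic_condition} identity for the true solved configuration rigidly rotated by $\tfrac{2\pi}{r}\rho$, namely
\begin{equation*}
\sum_{i,j} s_{i,j}\parr{c,\theta-\varphi_i-\tfrac{2\pi}{r}\rho}\, f\parr{\R{\varphi_i+\tfrac{2\pi}{r}\rho}\parr{t_i+b_{i,j}}}=0,
\end{equation*}
in which $\varphi_i$ appears both in the indicator and in the rotation. Summing this over $\rho=0,\ldots,r-1$ and then, \emph{for each piece $i$ separately}, shifting the summation index by $\tfrac{r}{2\pi}\varphi_i$ --- legitimate exactly because $\varphi_i\in\set{0,\tfrac{2\pi}{r},\ldots,\tfrac{2\pi}{r}(r-1)}$ --- removes every occurrence of $\varphi_i$ and yields~\eqref{eqn:eq_basic_condition_wrotation}; this is the paper's closing step (``all terms $\varphi_i$ may be omitted''), and it is unavoidable because the cancellation genuinely mixes different values of $\rho$. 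Alternatively, your augmented-puzzle reduction can be made rigorous without any per-slice claim: apply Proposition~\ref{proposition:basic_condition} \emph{once} to the translation-only puzzle with $rN$ pieces (each piece pre-rotated in all $r$ ways, the copy $(i,\rho)$ placed at $\R{\frac{2\pi}{r}\rho}t_i$), whose single sum over all $rN$ pieces at type $(c,\theta)$ is exactly the entire double sum of~\eqref{eqn:eq_basic_condition_wrotation}. Either repair works; as written, your proof has a genuine gap at its central step.
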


Note that Equation~\ref{eqn:eq_basic_condition_wrotation} yields a representation of the puzzle which is invariant to rotations of the pieces. The solution will consist of $r$ copies of the puzzle. Each piece is associated with a possibly different rotation angle according to the copy of the puzzle to which it is translated as illustrated in Figure~\ref{fig:example_rotations}.

\begin{proof}
Proposition~\ref{proposition:equivalent_representation} implies that if the puzzle is solved by placing the pieces at the locations $\hat{t}_1,\ldots,\hat{t}_N$ rotated by $\varphi_1,\ldots,\varphi_N$ then for every $(c,\theta)$ and every function $f$ we have
 \begin{equation}
 \label{eqn:basic_eqn_wrotation1}
 \sum_{i,j} s_{i,j}(c,\theta-\varphi_i) f\parr{\hat{t}_i+\R{\varphi_i}b_{i,j}} = 0.
 \end{equation}
 Set $t_i=\R{-\varphi_i}\hat{t}_i$ to yield
 \begin{equation}
 \label{eqn:basic_eqn_wrotation2}
 \sum_{i,j} s_{i,j}(c,\theta-\varphi_i) f\parr{\R{\varphi_i}\parr{t_i+b_{i,j}}} = 0.
 \end{equation}
 Next, sum over all admissible rotations
 \begin{equation}
 \label{eqn:basic_eqn_wrotation3}
\sum_{\rho=0}^{r-1} \sum_{i,j} s_{i,j}(c,\theta-\varphi_i-\frac{2\pi}{r}\rho) f\parr{\R{\varphi_i+\frac{2\pi}{r}\rho}\parr{t_i+b_{i,j}}} = 0.
 \end{equation}
This equation can be understood as the summation over $r$ rotated copies of the puzzle (as illustrated in Figure~\ref{fig:example_rotations}). The proposition follows by noticing that since $\varphi_i \in \set{0,\frac{2\pi }{r},\ldots,\frac{2\pi }{r}(r-1)}$, all terms $\varphi_i$ may be omitted.
\hfill
\end{proof}
\bigskip

Proposition~\ref{proposition:basic_condition_rotations} provides a construction which enables us to apply the algorithms developed in Sections~\ref{section:alg_representation} and~\ref{section:solving} to solve puzzles with an unknown discrete set of orientations. In Figure~\ref{fig:example_rotations} we show an example where we use this construction to solve a puzzle where pieces can be rotated into one of four different orientations.

\begin{figure}[t]
\centering
\begin{subfigure}[b]{0.3\textwidth}
  \centering
  \includegraphics[scale=0.9]{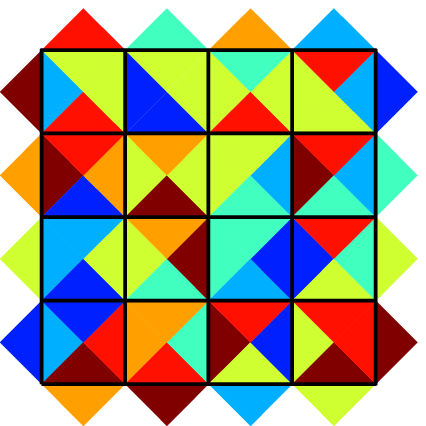}
  \caption{Scrambled}
  \includegraphics[scale=0.9]{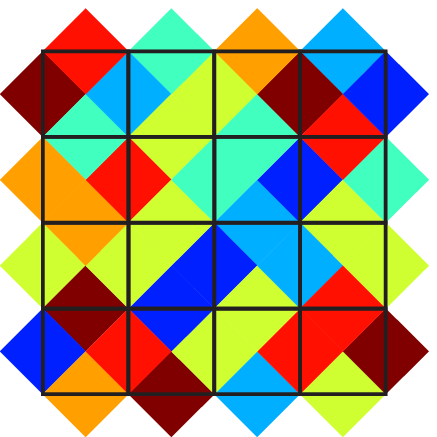}
  \caption{Solution}
\end{subfigure}
\begin{subfigure}[b]{0.69\columnwidth}
  \centering
  \includegraphics[scale=0.7]{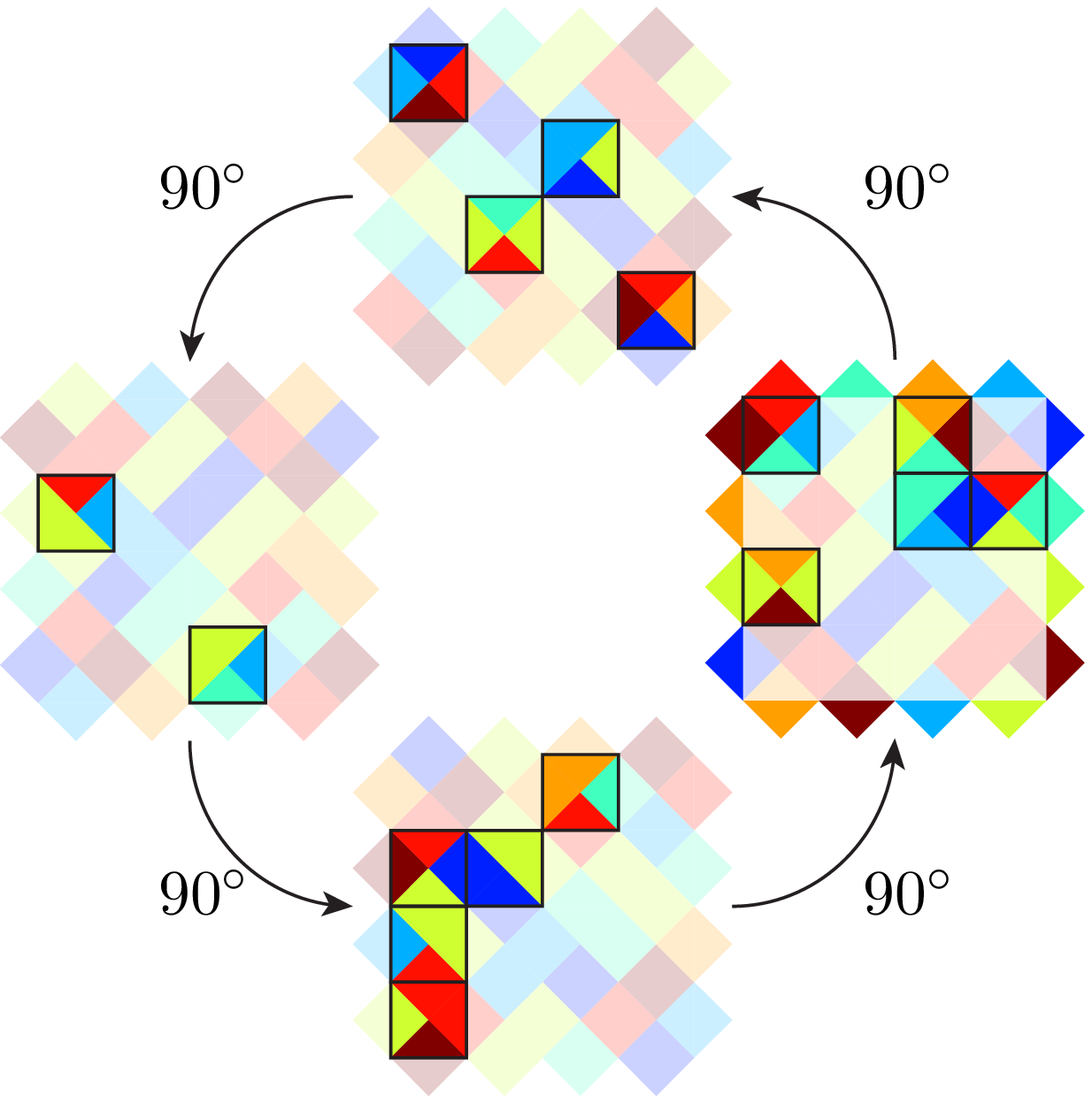}
  \caption{Solution of the augmented puzzle}
\end{subfigure}
\caption{Unknown orientation addressed by solving an augmented puzzle, generated by duplicating the entire puzzle, rotated in each of the admissible orientations. Rotated pieces (and frames) are visualized by transparency.}\label{fig:example_rotations}
\end{figure}

\section{Concluding Remarks}
In this paper, we propose a novel representation for apictorial edge-matching puzzles. We explain how to generate systems of polynomial equations which are satisfied by puzzle solutions. We further show how to construct systems which are \emph{complete} representations for 2-dimensional translation only puzzles. We prove that for these systems the converse also holds, \ie, any solution of the system is a solution of the puzzle.

Using this representation, we devise two algorithms for approximating solutions of edge-matching puzzles. Both algorithms rely on solving a sequence of \emph{continuous} and \emph{global} convex relaxations. An iterative algorithm based on linear programming relaxation is proposed for the case where we know in advance that pieces may only be placed in a finite set of predetermined locations. For the more general setting, we propose an iterative algorithm based on semidefinite programming relaxation.

Finally, we extend our computational framework and algorithms to various interesting variants of edge-matching puzzles including higher-dimensional puzzles, puzzles in which the pieces have irregular shapes and puzzles in which pieces can be rotated to one of a finite set of discrete orientations.

\bigskip
\paragraph{Acknowledgements}
Research was supported in part by the Israel Science Foundation grants numbers 764/10 and 1265/14 and by the Minerva Foundation. The vision group at the Weizmann Institute is supported in part by the Moross Laboratory for Vision Research and Robotics.

\bigskip
\appendix
\section{Proof of Proposition~\ref{proposition:equivalent_representation} (Complete Polynomial Representation)}
\label{section:proof_equivalence}

\bigskip
\mbox{\sffamily{\color{header1}Proposition~\ref{proposition:equivalent_representation}.}}
{\it
There exists (constructively) a polynomial system $\PP$ with
\begin{equation*}
K(c,\theta) = \#\set{\text{edges of type } (c,\theta)}
\end{equation*}
equations for each edge type $(c,\theta)$ that satisfies that $T_1,\ldots,T_N$ is a solution of $\PP$ if and only if it is a solution of the puzzle.}
\bigskip

\begin{proof}
We prove the proposition in three main steps: we (i) consider each edge type $(c,\theta)$ separately and construct a polynomial system $\PP_1$ that determines all admissible edge pairings; (ii) add constraints accounting for the geometry of puzzle pieces, thus achieving a polynomial system $\PP_2$ which exactly determines the puzzle solutions; and (iii) show that $\PP_3$, the polynomial representation of Proposition~\ref{proposition:equivalent_representation}, is equivalent to $\PP_2$, and is therefore a complete representation of the puzzle.

\bigskip
To prove the proposition we shall use complex numbers to represent 2-dimensional coordinates, that is, we consider piece locations $t_1,\ldots,t_N\in\Complex$.
This will both simplify notations and facilitate our derivations.

We begin by proving a simple lemma. Fix $v_1,\ldots,v_N \in \Complex$ and consider the polynomial system
\begin{equation}
\set{p_k(u_1,\ldots,u_N)=\sum_i u_i^k - \sum_iv_i^k = 0}_{k=1,\ldots,K}
\label{eq:eq_combinatorical_system}
\end{equation}
in the variables $u_1,\ldots,u_N$.

\bigskip
\begin{lemma} \label{lemma:bezout}
For $K\geq N$, the polynomial system \eqref{eq:eq_combinatorical_system} has exactly $N!$ solutions:
\begin{equation*}
(u_1,\ldots,u_N) = (v_{\sigma(1)},\ldots,v_{\sigma(N)})
\end{equation*}
for all permutations $\sigma$ of $1,\ldots,N$.
\end{lemma}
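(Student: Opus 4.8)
The plan is to establish the two bounds $\#\{\text{solutions}\}\geq N!$ and $\#\{\text{solutions}\}\leq N!$ separately, and then argue that the higher-order equations ($k>N$) are redundant. The lower bound is immediate: each power sum $\sum_i u_i^k$ is a symmetric function of $u_1,\ldots,u_N$, so replacing $(u_1,\ldots,u_N)$ by any permutation $(v_{\sigma(1)},\ldots,v_{\sigma(N)})$ leaves every $p_k$ unchanged; since $p_k(v_1,\ldots,v_N)=0$ trivially, each of the $N!$ permuted tuples solves the whole system. Assuming the $v_i$ are distinct (as they are in the puzzle application, where they are exponentials of distinct edge locations), these $N!$ tuples are themselves distinct, giving at least $N!$ solutions.

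For the upper bound I would work with only the first $N$ equations $p_1,\ldots,p_N$, whose degrees are $1,2,\ldots,N$. The key is the classical Newton identities, which express the elementary symmetric polynomials $e_1,\ldots,e_N$ of the $u_i$ as polynomials in the power sums $\sum_i u_i^k$, $k=1,\ldots,N$ (and conversely). Hence the equations $\sum_i u_i^k=\sum_i v_i^k$ for $k=1,\ldots,N$ force $e_m(u_1,\ldots,u_N)=e_m(v_1,\ldots,v_N)$ for every $m$. Consequently $u_1,\ldots,u_N$ and $v_1,\ldots,v_N$ are the roots, counted with multiplicity, of one and the same monic degree-$N$ polynomial $\prod_{m}(x-v_m)$, so the two multisets coincide and $(u_1,\ldots,u_N)$ must be a permutation of $(v_1,\ldots,v_N)$. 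This pins the solution set down to exactly the $N!$ permuted tuples and no others. The same bound $N!$ is what B\'ezout's theorem delivers from the degree product $1\cdot2\cdots N=N!$; the Newton-identity argument is preferable here because it simultaneously identifies the solutions and shows they exhaust the count, sidestepping the need to rule out positive multiplicities and solutions at infinity that a bare application of B\'ezout would require.

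Finally, the equations indexed by $k=N+1,\ldots,K$ are each symmetric in the $u_i$ and therefore already vanish on every permuted tuple $(v_{\sigma(1)},\ldots,v_{\sigma(N)})$, so appending them neither removes any of the $N!$ solutions already found nor introduces new ones. It follows that for every $K\geq N$ the full system \eqref{eq:eq_combinatorical_system} has exactly the $N!$ solutions claimed. The only delicate point is the distinctness hypothesis on the $v_i$: if some $v_i$ were repeated, the multiset characterization would still hold, but the number of \emph{distinct} ordered solutions would drop below $N!$. I expect the main work in the write-up to be recording why distinctness is guaranteed in the setting where the lemma is applied, and, if one insists on the B\'ezout route, verifying finiteness of the common zero locus and the absence of contributions at infinity so that the degree product yields the exact affine count rather than merely an upper bound.
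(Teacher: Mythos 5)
Your proof is correct, and its lower-bound half (the permuted tuples solve the system because power sums are symmetric) coincides with the paper's; but for the upper bound you take a genuinely different route. The paper applies B\'ezout's theorem to the first $N$ equations, whose degrees are $1,2,\ldots,N$, to conclude $|S|\leq \prod_{i=1}^{N} i = N!$, notes the bound persists for $K>N$, and then combines this count with the $N!$ permuted tuples to pin down the solution set. You instead invoke Newton's identities: over $\Complex$ (characteristic zero) the power sums $p_1,\ldots,p_N$ determine the elementary symmetric polynomials $e_1,\ldots,e_N$, so the equations force $\prod_{m}(x-u_m)=\prod_{m}(x-v_m)$ and hence the multisets $\{u_1,\ldots,u_N\}$ and $\{v_1,\ldots,v_N\}$ coincide. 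What each buys: the paper's argument is a one-line appeal to a classical theorem and matches the B\'ezout-based narrative of the main text, but as a bare counting bound it only caps the number of (isolated) solutions --- it must be paired with the lower bound to identify them, and a fully rigorous version has to address multiplicities, components at infinity, and zero-dimensionality, which the paper glosses over. Your Newton-identity argument is more elementary and self-contained, directly characterizes the entire solution set (so finiteness and the redundancy of the equations with $k>N$ come for free), and works verbatim over any field of characteristic zero. Your distinctness caveat is also a genuine refinement rather than pedantry: both the lemma as stated and the paper's claim $|S|\geq N!$ implicitly assume the $v_i$ are pairwise distinct --- for $N=2$ and $v_1=v_2=0$ the system $u_1+u_2=0$, $u_1^2+u_2^2=0$ has the single solution $(0,0)$, not $2!$ distinct ones --- so recording that hypothesis (it corresponds in the puzzle application to distinct edges of a given type occupying distinct locations) strengthens the write-up.
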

\begin{proof} Denote by $S$ the set of solutions of \eqref{eq:eq_combinatorical_system}. By construction, $(v_{\sigma(1)},\ldots,v_{\sigma(N)}) \in S$ for any permutation $\sigma$, therefore, $|S|\geq N!$.

On the other hand, if we assume that $K=N$ then B\'ezout's theorem \cite{Coolidge2004, Sottile2002} asserts that
\begin{equation*}
|S| \leq \prod_{i=1}^N deg\{p_i\} = \prod_{i=1}^N i = N!.
\end{equation*}
This bound clearly holds for $K>N$, thus concluding the proof.
\hfill
\end{proof}
\bigskip

We shall refer to a system of the form \eqref{eq:eq_combinatorical_system} which satisfies the conditions of Lemma~\ref{lemma:bezout} as a \emph{bipartite system}, since it encodes all possible perfect matchings in the complete bipartite graph $\mathbf{K}_{N,N}$ whose vertices are the $u_i$'s and $v_i$'s, see Figure~\ref{fig:bipartite_match}.

\begin{figure}[h!]
  \centering
  \includegraphics[]{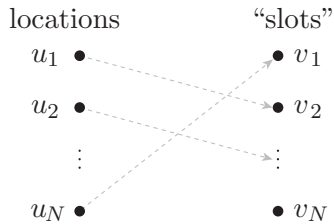}
  \caption{Discrete piece location assignment as a bipartite matching problem.}
  \label{fig:bipartite_match}
\end{figure}

\bigskip
Consider the global coordinate $t_{i,j}=t_i+b_{i,j}$ of every edge element \emph{independently}, correspondingly, consider the exponential coordinates
\begin{equation}
T_{i,j} = \Exp{t_{i,j}} = \Exp{t_i+b_{i,j}} = \Exp{t_i} \Exp{b_{i,j}} = T_iB_{i,j}
\end{equation}

We first treat each edge type $(c,\theta)$ separately. Let
\begin{equation}
V(c,\theta) = \set{(i,j):c_{i,j}=c \text{ and } \theta_{i,j}=\theta}
\end{equation}
be the set of indices of $(c,\theta)$-edges, and
\begin{equation}
\V(c,\theta) = \set{(i,j):c_{i,j}=c \text{ and } \theta_{i,j}=\theta+\pi}
\end{equation}
be the set of indices of their complimentary $(c,\theta+\pi)$-edges. Consider the set of polynomial equations
\begin{equation}
\PP_{(c,\theta)} = \set{ \sum_{(i,j)\in V(c,\theta)} {T_{i,j}}^k = \sum_{(i,j)\in \V(c,\theta)} {T_{i,j}}^k }_{k=1,\ldots,K(c,\theta)}
\label{eqn:Pctheta_Tij}
\end{equation}
of degrees $1,2,\ldots,K(c,\theta)$.

The sets $V(c,\theta)$ and $\V(c,\theta)$ are disjoint and $|V(c,\theta)|=|\V(c,\theta)|$, otherwise the puzzle is unsolvable. As such, we may consider the RHS of \eqref{eqn:Pctheta_Tij} fixed and employ Lemma~\ref{lemma:bezout}.

Namely, for sufficiently many polynomials, specifically,
\begin{equation}
K(c,\theta)\geq |V(c,\theta)|=\#\set{\text{edges of type } (c,\theta)}
\label{eqn:eq_inequality_Kctheta_geq_V}
\end{equation}
the polynomial system \eqref{eqn:Pctheta_Tij} is a bipartite system, hence solutions of \eqref{eqn:Pctheta_Tij} are exactly the valid pairings of edges in $V(c,\theta)$ and $\V(c,\theta)$.

Next, we take the union of all such systems, over all types of edges, with $K(c,\theta)$ chosen according to \eqref{eqn:eq_inequality_Kctheta_geq_V}:
\begin{equation}
\PP_1 = \bigcup_{(c,\theta)} \PP_{(c,\theta)}.
\end{equation}
This is a polynomial system of equations, in variables $T_{i,j}$, in particular \emph{every edge} of the puzzle is constrained to satisfy the bipartite relation of the system it belongs to. That is, all edge elements of the puzzle are paired with appropriate edges.

However, employing independent coordinates for edge elements $T_{i,j}$ is insufficient. Since the internal geometry of a puzzle piece is not taken into account, a solution of $\PP_1$ may imply an invalid puzzle assignment (\eg, two edge elements of the same piece can pair).

This, however, may be easily resolved by relating the coordinates of an edge to the piece it belongs to, namely, by adding the constraints $T_{i,j} = T_i B_{i,j}$ for all pieces. Therefore, generating the augmented system
\begin{equation}
\PP_2 = \PP_1 \cup \set{T_{i,j} = T_i B_{i,j}}_{i,j}.
\end{equation}
The solutions of $\PP_2$ must therefore satisfy both the bipartite relations and the internal geometry of each piece, that is, a solution of $\PP_2$ is a solution of the puzzle.

Lastly, we directly employ these linear constraints and substitute $T_{i,j} = T_i B_{i,j}$ into \eqref{eqn:Pctheta_Tij} to obtain the polynomial system
\begin{equation}
\widehat{\PP}_{(c,\theta)} = \set{ \sum_{(i,j)\in V(c,\theta)} {T_i}^k {B_{i,j}}^k = \sum_{(i,j)\in \V(c,\theta)} {T_i}^k {B_{i,j}}^k }_{k=1,\ldots,K(c,\theta)}.
\label{eqn:Pctheta_TiBij}
\end{equation}
Using the notation $\alpha_i^{(k,c,\theta)}=\sum_j {s_{i,j}(c,\theta) B_{i,j}^k}$ of Section~\ref{section:poly_representation}, this can be equivalently written as
\begin{equation}
\widehat{\PP}_{(c,\theta)} = \set{\sum_{i} \alpha_i^{(k,c,\theta)}T_i^k = 0 }_{k=1,\ldots,K(c,\theta)}.
\label{eqn:Pctheta_AlphaiTi}
\end{equation}
Again, taking the union over all types of edges yield the polynomial system
\begin{equation}
\PP_3 = \bigcup_{(c,\theta)} \widehat{\PP}_{(c,\theta)}
\end{equation}
with $K(c,\theta)$'s chosen as before.

Notice that by construction the polynomial systems $\PP_3$ and $\PP_2$ are equivalent. Therefore, it is guaranteed that a solution of $\PP_3$ corresponds to a valid solution of the puzzle.
\hfill
\end{proof}

\section{Solution of 2-dimensional puzzles}
\label{section:solution_2d_techincal}

The presentation in Section~\ref{section:solving} was simplified for notational purposes by considering 1-dimensional puzzles (\ie, coordinates over $\Real$).  We now provide the technical details required for adapting these results to the case of 2-dimensional puzzles. Similar modifications can be applied to address puzzles of arbitrary dimension, thereby completing the discussion of Section~\ref{sect:extensions_highdim} regarding puzzles in higher dimensions.

In Section~\ref{section:reformulation_vandermonde} the requirement that $X$ is a Vandermonde matrix needs to be replaced with a multivariate analogue. Recall that in the 2-dimensional case the variable associated with the $i$'th piece is a 2-vector $T_i=[T_{ix}\ T_{iy}]^{T}$. Correspondingly, $X$ in the optimization problem \eqref{eqn:eq_equiv_problem_vandermonde} should be of the form
\begin{equation}
\begin{bmatrix}
1 & T_{1x} & T_{1y} & T_{1x}^2 & T_{1x}T_{1y} & T_{1y}^2 & \cdots & T_{1y}^K\\
1 & T_{2x} & T_{2y} & T_{2x}^2 & T_{2x}T_{2y} & T_{2y}^2 & \cdots & T_{2y}^K\\
1 & T_{3x} & T_{3y} & T_{3x}^2 & T_{3x}T_{3y} & T_{3y}^2 & \cdots & T_{3y}^K\\
\vdots & \vdots & \vdots & \ddots & \vdots \\
1 & T_{Nx} & T_{Ny} & T_{Nx}^2 & T_{Nx}T_{Ny} & T_{Ny}^2 & \cdots & T_{Ny}^K\\
\end{bmatrix}.
\label{eqn:vandermonde_2d_matrix}
\end{equation}
That is, a matrix whose $i$'th row entries are all the monomials in $T_i$ of total degree up to $K$.

Similarly, in Section~\ref{section:reformulation_rank_one} the requirement that each matrix $Z_i$ is a Hankel matrix is replaced by constraining $Z_i$ to be of the form corresponding to the outer product of all univariate monomials in the entries of $T_i$ of degree up to $K$, namely, each $Z_i$ should be of the form
\begin{equation}
\begin{pmat}[{}]
1 \cr\- T_{ix} \cr \vdots \cr T_{ix}^K \cr\-
     T_{iy} \cr \vdots \cr T_{iy}^K \cr
\end{pmat}
\begin{pmat}[{}]
1 \cr\- T_{ix} \cr \vdots \cr T_{ix}^K \cr\-
     T_{iy} \cr \vdots \cr T_{iy}^K \cr
\end{pmat}^T
=
\begin{pmat}[{|..|}]
1 & T_{ix} & \cdots & T_{ix}^K &
    T_{iy} & \cdots & T_{iy}^K \cr\-
T_{ix} & T_{ix}^2 & & T_{ix}^{K+1} &
T_{ix}T_{iy} & \cdots & T_{ix}T_{iy}^K \cr
\vdots &  &  \iddots & & \vdots &  &  \vdots \cr
T_{ix}^{K} & T_{ix}^{K+1} & \cdots  & T_{ix}^{2K} &
T_{ix}^KT_{iy} & \cdots & T_{ix}^KT_{iy}^K \cr\-
T_{iy} & T_{ix}T_{iy} & & T_{ix}^KT_{iy} &
T_{iy}^2 & \cdots & T_{iy}^{K+1} \cr
\vdots &  &  \iddots & & \vdots &  &  \vdots \cr
T_{iy}^{K} & T_{ix}T_{iy}^K & \cdots  & T_{ix}^KT_{iy}^K &
T_{iy}^{K+1} & \cdots & T_{iy}^{2K} \cr
\end{pmat}.
\label{eqn:eq_hankel_matrix_2d_Ts}
\end{equation}
Note that the main diagonal blocks of $Z_i$ are now Hankel matrices, however, the entire matrix $Z_i$ is no longer Hankel.

\bibliographystyle{siam}

\end{document}